\newtheorem{definition}{Definition}
\newtheorem{thm}[theorem]{Theorem}
\newtheorem{cor}{Corollary}
\newtheorem{rem}{Remark}
\newtheorem{prop}{Proposition}
\newtheorem{example}{Example}
\newcolumntype{P}[1]{>{\centering\arraybackslash}p{#1}}
\begin{document}

\title{Extended Markov Games to Learn  Multiple Tasks \\ in Multi-Agent Reinforcement Learning}

\author{Borja G. Le\'on \and Francesco Belardinelli \institute{Imperial College London,
UK, email: bg19@imperial.ac.uk, francesco.belardinelli@imperial.ac.uk} }

\maketitle

\begin{abstract}
    The combination of Formal Methods with Reinforcement Learning (RL) has recently
    attracted interest as a way for single-agent RL to learn multiple-task specifications.
    In this paper we extend this convergence to multi-agent settings and formally define
    Extended Markov Games as a general mathematical model that allows multiple RL agents
    to concurrently learn various non-Markovian specifications. To introduce this new
    model we provide formal definitions and proofs as well as empirical tests of RL
    algorithms running on this framework. Specifically, we use our model to train two
    different logic-based multi-agent RL algorithms to solve diverse settings of
    non-Markovian {\em co-safe} $LTL$ specifications.

\end{abstract}

\section{Introduction} \label{sec:intro}

Reinforcement Learning is increasingly becoming a dominant paradigm
for training autonomous agents in unknown and high-dimensional
scenarios \cite{mnih2015human,schulman2017proximal}. Its combination
with temporal logic from Formal Methods
\cite{de2018reinforcement,
toro2018teaching}, holds considerable promise to help learning agents
tackle several tasks as a single specification, including safety
constrains, which play a major role in many real-world scenarios, such
as autonomous driving \cite{shalev2016safe}, autonomous
robots \cite{junges2016safety} or network packet delivery \cite{ye2015multi}. 
Additionally, temporal logic offers a simple but efficient way to
use \ac{RL} algorithms to find optimal policies in
non-markovian settings \cite{bacchus1996rewarding, brafman2018ltlf,
de2018reinforcement}.

These real-world scenarios, in which agents have to work in an 
environment where there are other learning agents, are naturally
modelled as a multi-agent system (MAS). Unfortunately, MAS have
several challenges on their own, such as scalability with the number
of agents as well as issues pertaining to non-stationarity, i.e, that 
in MAS probabilities attached to transitions do not depend solely on the
single agent's actions and the corresponding transition probabilities.

This dependency on other agents' policies means that the notion
of optimality, central to \acp{MDP}, needs
to be encapsulated in a sort of equilibrium to solve the optimization
problem. In this context, Nash equilibrium \cite{hu1998multiagent}, 
where each agent's policy is the best response to the others', 
is one of the most common solution concepts. \ac{MARL} has the
potential to overcome scalability shortcomings by showing promising
results in applications such as robot
swarms \cite{huttenrauch2017guided} and the aforementioned autonomous
vehicles \cite{cao2012overview}.

Despite the growing success of \ac{MARL}, also due to the adoption of
neural networks in Deep (RL) \cite{tampuu2015multiagent}, there is still 
little work on combining these multi-agent learning techniques with formal methods.
This is the long-term challenge that we here start to address.

\textbf{Contribution.}
This paper focus on formally extending \acp{MG}, the mathematical
model that is traditionally used in MARL, to build a new general model, i.e, not focused solely in one kind of multi-agent game, that allows multiple learning agents to concurrently fulfill various
non-Markovian specifications in multi-agent settings. To support our
model with empirical evidence, we also extended two logic-based RL
algorithms to multi-agents systems in order to show how various
learning agents can fulfill different types of non-Markovian
specifications expressed in {\em co-safe-} Linear-time Temporal Logic
($LTL$).  Our results are promising and point to interesting future
work we discuss in Sec.~\ref{conclusions}.

\textbf{Related Work}. There is a growing interest
towards the interactions of Formal Methods and Reinforcement
Learning.
In \cite{alshiekh2018safe} model checking of temporal logics is
adopted to build a shielding system that is meant to prevent a
learning agent from taking dangerous
actions. In \cite{mason2017assured} verification
techniques are applied to abstract MDPs to check abstracted
policies learned with RL. Related works (see, e.g., \cite{zhou2018safety,
wen2017learning})
pursuit similar verification goals in the sub-field of Inverse
Reinforcement Learning, where reward functions are initially unknown and 
Formal Methods enable some degree of safety on the extracted rewards.
Closer to the present contribution,
\cite{muniraj2018enforcing}
tackles MARL with rewards expressed in temporal logic. Specifically,
 they enforce \ac{STL} specifications in a mini-max game
 extension of the popular decentralized \ac{I-DQN} algorithm for
 adversarial settings. Our approach differs under two key aspects.
 First, we provide a formal definition on how to use \ac{FM} in order
 train \ac{MARL} policies to solve \ac{NMRG},
 which is not restricted to a specific temporal logic, such as \ac{STL}. Second,
 our approach is meant to cover a wide range of multi-agent scenarios, not only
 adversarial as in \cite{muniraj2018enforcing}.

Solving multiple task-specifications expressed in temporal logic has
also been previously addressed in \cite{de2018reinforcement}, where
authors show how an RL agent is able to learn non-Markovian rewards
expressed in $LTL$ over finite traces ($LTL_f$) and  Linear
Dynamic Logic over finite traces ($LDL_f$), by using reward
shaping \cite{ng1999policy}. Toro et al. \cite{toro2018teaching} also
tackle the problem of learning multiple tasks expressed in {\em co-safe}
$LTL$.
They introduce an extension of Q-learning called \ac{LPOPL} that makes
use of the additional information provided by the temporal logic
specifications in order to achieve an agent's goal. Our work focus on
extending what \cite{de2018reinforcement, toro2018teaching} developed
for the single-agent framework to multi-agent systems, where
reinforcement learning guided by temporal specifications could prove
to be an effective tool to address some of the complex challenges
mentioned above, including scalability, stationarity or equilibrium of
solutions.


\section{Background} \label{sec:backg}
In this section we recall preliminary notions on Markov games,
the linear-time temporal logic $LTL$, and Q-learning.

\paragraph{Markov Games.}\label{subsec:MG}

A single-agent \ac{RL} task is typically modelled as a \ac{MDP}, which
means that, in order to work with the performing algorithms from RL,
the reward given to the learning agent need to be Markovian, i.e.,
it must depend only on the last state and action taken.

A multi-agent \ac{RL} task, i.e, a problem-solving cooperative,
competitive or adversarial setting where there are two or more
learning agents whose goal is to individually collect the maximum
possible reward by interacting with the environment, is modeled as a
Markov game \cite{littman1994markov}.

\begin{definition}[MG] \label{markovgame}
A {\em Markov game} is a tuple $\mathcal{G}=\langle N, S, A, P, R,
\gamma\rangle$ where:
\begin{itemize}
\item $N$ is the \textit{set of $n$ agents}.
\item $S$ is a finite set of \textit{states}.
\item $A$ is the \textit{action set} where each $A_{i}$
    is the action set of agent $i \in N$. $A^t$ refers to the joint action taken by all agents in state $s_t$.

  \item $P: S \times A_{1} \times \cdots \times A_{n} \times S \rightarrow[0,1]$
    is the \textit{transition probability function} that returns the probability of transitioning to a new state given the previous state and the actions taken by all agents.

  \item $R_i : S \times A_1 \times \ldots \times A_N \rightarrow \mathbb{R}$ is the reward function of agent $i$ and $R=\{R_1, \ldots ,R_N\}$  is the set of reward functions.
\item $\gamma \in(0,1]$ is the \textit{discount factor} that is used to
    discount future rewards.
\end{itemize} 
\end{definition}

Markov games can be thought of as multi-agent extension of MDPs where
actions are chosen and executed simultaneously, thus new states depend
on the action taken by each single agent. Notice also that we are
working in a fully observable setting, which means that
individual observation functions are trivial, and therefore omitted
for sake of simplicity.\\

\noindent
{\bf Linear Time Temporal Logic}
\label{subsec:LTL}
\cite{pnueli1977temporal} is an expressive and well-studied
logic-based language to specify
  properties of MAS modelled as Markov games. Formulae in $LTL$
are built from a set $\mathcal{AP}$ of atoms,
by using Boolean and temporal operators as follows:
\begin{equation}
\varphi \enskip::=\enskip p\mid\neg \varphi\mid \varphi_{1} \wedge \varphi_{2}\mid \bigcirc \varphi\mid \varphi_{1} \mathrm{U} \varphi_{2}
\end{equation}
where $p \in \mathcal{AP}$, $\bigcirc$ is the {\em next} operator, and $\mathrm{U}$ is the {\em until}
operator. We use the standard abbreviations:
{\em eventually}  $\diamond \varphi \,\equiv \top
\mathrm{U} \varphi$; {\em always} $\square \varphi \,\equiv\, \neg \diamond
\neg \varphi$;
{\em weak next} $\bullet \varphi
\,\equiv\, \neg \bigcirc \neg \varphi$ (when interpreted over finite traces $\neg
\bigcirc \varphi \not\equiv \bigcirc \neg \varphi$); and $Last
\,\equiv\, \bullet false$ denotes the end of the trace (if finite).

The semantics of $LTL$ formulae over $\mathcal{AP}$ are defined over
(finite or infinite) sequences $\sigma=\left\langle\sigma_{0}, \sigma_{1}, \sigma_{2},
\ldots\right\rangle \in (2^{\mathcal{AP}})^{\omega} \cup (2^{\mathcal{AP}})^{+}$ of truth assignments
 for the atoms in $\mathcal{AP}$, where each $\sigma_i$ is a truth
assignment to each atom in $\mathcal{AP}$. By $p
\in \sigma_{i}$, for atom $p \in \mathcal{AP}$, we mean that $p$ is
true in $\sigma_i$.  The length of a sequence $\sigma$ is denoted as
$|\sigma|$, with $|\sigma| = \omega$ if $\sigma$ is infinite.
For $i \leq |\sigma|$, let $\sigma_{\geq i}$ be the suffix
$\sigma_{i}, \sigma_{i+1}, \ldots$ of $\sigma$ starting at $\sigma_{i}$ and $\sigma_{\leq i}$ its prefix
$\sigma_{0},\ldots, \sigma_{i}$.


We can now define formally when a sequence
$\sigma$ \textit{satisfies} an $LTL$ formula $\varphi$ at time $i\geq 0$,
denoted by $\langle\sigma, i\rangle \models \varphi$, as follows
(clauses for Boolean connective are immediate and thus omitted):
\begin{tabbing}
  $\langle\sigma, i\rangle \models p$ \ \ \ \ \ \ \  \ \ \ \ \= iff \= $p \in \sigma_{i}$\\
$\langle\sigma, i\rangle \models \bigcirc \varphi$ \> iff \> $|\sigma| \geq i+1$ and $\langle\sigma, i+1\rangle \models \varphi$\\
  $\langle\sigma, i\rangle \models \varphi_{1} \mathrm{U} \varphi_{2}$ \> iff \> for some $j$, $i \leq j \leq |\sigma|$ and $\langle\sigma, j\rangle \models \varphi_{2}$, and\\
  \> \> for all $k$, $i \leq k<j$ implies $\langle\sigma, k\rangle \models \varphi_{1}$
\end{tabbing}

A sequence $\sigma$ is said to satisfy or model $\varphi$ iff $\langle
\sigma, 0 \rangle \models \varphi$. Notice that, whenever $\sigma$ is
infinite, we obtain the standard semantics for $LTL$.

We mentioned in Sec.~\ref{sec:intro} two variants of $LTL$ that
have been used in the latest works extending single-agent RL with temporal
logic \cite{de2018reinforcement, toro2018teaching}. Both are motivated by 
the fact that, in RL, agents are typically trained over
finite episodes, but in different way. The former is
{\em $LTL$ over finite traces} ($LTL_f$), which is standard $LTL$
interpreted over finite traces only, instead of infinite
ones \cite{de2013linear}. The latter is {\em
co-safe} $LTL$ \cite{kupferman2001model}, which -- like $LTL$ -- is interpreted
over infinite traces, but 
its syntax is restricted
as follows:
\begin{equation}
  \varphi \enskip::=\enskip p \mid\neg p \mid \varphi_{1} \wedge \varphi_{2} \mid \varphi_{1} \vee \varphi_{2} \mid \bigcirc \varphi\mid \varphi_{1} \mathrm{U} \varphi_{2}
\end{equation}

The restricted syntax of {\em co-safe} $LTL$ is meant to guarantee
that formulas, if true, are true 
after a finite number of steps.
We can show that $LTL_f$ and {\em co-safe} $LTL$ are actually
equivalent, with the former restricting the semantics while the later
restricts the syntactic. The equivalence is stated in
Corollary~\ref{cor1} below.

First, by next result every {\em co-safe} $LTL$ formula is true iff it
is satisfied by a finite sequence.
\begin{prop} \label{eq:co-safe LTL_f}
  Let $\sigma \in (2^{\mathcal{AP}})^{\omega}$ be an infinite sequence, and  $\varphi$ a  {\em co-safe} $LTL$ formula. Then,
\begin{eqnarray} 
\sigma \models \varphi & \text{iff} & \text{for some $i \geq 0$}, \sigma_{\leq i} \models \varphi
\end{eqnarray}
\end{prop}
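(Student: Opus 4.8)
The plan is to establish a slightly stronger statement by structural induction on the co-safe $LTL$ formula $\varphi$, so that the inductive hypothesis is strong enough for the conjunction and until cases. Concretely, I would prove: for every co-safe $LTL$ formula $\varphi$, every infinite sequence $\sigma \in (2^{\mathcal{AP}})^{\omega}$, and every index $i \geq 0$, the following are equivalent: (i) $\langle \sigma, i \rangle \models \varphi$; and (ii) there exists $j \geq i$ with $\langle \sigma_{\leq j}, i \rangle \models \varphi$; moreover, whenever (ii) holds for some $j$, it holds for every $j' \geq j$ (a ``prefix-extension'' closure clause, needed so that finitely many subformula witnesses can be merged on a common prefix). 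The Proposition is then the special case $i = 0$, reading $\sigma_{\leq j} \models \varphi$ as $\langle \sigma_{\leq j}, 0 \rangle \models \varphi$.

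The base cases are the literals $p$ and $\neg p$: satisfaction of $\langle \sigma, i \rangle \models p$ (resp. $\langle \sigma, i\rangle\models\neg p$) depends only on the assignment $\sigma_i$, which occurs unchanged in every prefix $\sigma_{\leq j}$ with $j \geq i$ (in particular $j=i$); hence all conditions are equivalent and the extension closure is immediate. For $\varphi_1 \wedge \varphi_2$ I apply the inductive hypothesis to each conjunct, take $j$ to be the maximum of the two witnesses, and use the extension-closure clause to conclude that both conjuncts still hold on $\sigma_{\leq j}$; for $\varphi_1 \vee \varphi_2$ I keep the witness of whichever disjunct holds. For $\bigcirc \varphi$, note $\langle \sigma, i \rangle \models \bigcirc \varphi$ iff $\langle \sigma, i+1 \rangle \models \varphi$; by the inductive hypothesis this holds iff some prefix $\sigma_{\leq j}$ with $j \geq i+1$ satisfies $\varphi$ at position $i+1$, and for any such $j$ the prefix $\sigma_{\leq j}$ is long enough to carry a position $i+1$, so $\langle \sigma_{\leq j}, i \rangle \models \bigcirc \varphi$; the extension-closure clause transfers directly from the inductive hypothesis.

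The main obstacle is the until case $\varphi_1 \mathrm{U} \varphi_2$, whose defining condition combines an existential witness for $\varphi_2$ with a universal requirement on $\varphi_1$. If $\langle \sigma, i \rangle \models \varphi_1 \mathrm{U} \varphi_2$, pick the least $k \geq i$ with $\langle \sigma, k \rangle \models \varphi_2$; by the inductive hypothesis there is a witnessing prefix length for $\varphi_2$ at $k$, and for each of the \emph{finitely many} positions $m$ with $i \leq m < k$ a witnessing prefix length for $\varphi_1$ at $m$. Taking $j$ to be the maximum of these finitely many lengths (and $j \geq k$), the extension-closure clause guarantees $\langle \sigma_{\leq j}, k \rangle \models \varphi_2$ and $\langle \sigma_{\leq j}, m \rangle \models \varphi_1$ for all $i \leq m < k$, hence $\langle \sigma_{\leq j}, i \rangle \models \varphi_1 \mathrm{U} \varphi_2$, and the same $k$ works for every larger $j'$. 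Conversely, if $\langle \sigma_{\leq j}, i \rangle \models \varphi_1 \mathrm{U} \varphi_2$, its witness position $k \leq j$ and all relevant $\varphi_1$-positions lie inside the prefix, so by the ``prefix implies infinite'' direction of the inductive hypothesis the same positions witness $\langle \sigma, i \rangle \models \varphi_1 \mathrm{U} \varphi_2$. Throughout, the argument relies essentially on the co-safe syntax: since negation is confined to atoms, no subformula has the form $\square \psi$, $\neg \bigcirc \psi$, or a negated until, for which truth on $\sigma$ is not detectable on any finite prefix and the monotonicity exploited above would break.
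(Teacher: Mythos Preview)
Your proof is correct and follows the same overall strategy as the paper---structural induction on the co-safe formula---but packages the induction hypothesis differently. The paper proves the forward direction by defining an explicit recursive witness function $i(\sigma,\varphi)$ (for instance $i(\sigma,\varphi_1\wedge\varphi_2)=\max\{i(\sigma,\varphi_1),i(\sigma,\varphi_2)\}$, and an analogous $\max$ for $\mathrm{U}$), handling the shift in the $\bigcirc$ and $\mathrm{U}$ cases by passing to suffixes $\sigma_{\geq k}$ rather than to arbitrary positions; the backward direction is dispatched in one line by contrapositive. Your version instead strengthens the inductive statement in two ways: you work at an arbitrary position $i$ (equivalent to the paper's suffix trick), and you carry an explicit prefix-extension closure clause. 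That monotonicity clause is exactly what the paper uses silently when it takes a $\max$ in the conjunction and until cases and calls the Boolean cases ``immediate''; making it part of the hypothesis is cleaner and removes an unstated lemma. Conversely, the paper's explicit recursive formula for the witness length is slightly more informative than your bare existential claim, since it actually computes a concrete bound.
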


\begin{proof}
  We first prove direction $\Rightarrow$ by induction on the structure of $\varphi$.
%
Suppose $\sigma \models
\varphi$. We compute the value $i$ by induction on the structure of
$\varphi$ as follows:
\begin{tabbing}
  $i(\sigma, p)$ \ \ \ \ \ \ \ \ \ \ \ \ \= $=$ \= $0 = i(\sigma, \neg p)$\\
  $i(\sigma, \varphi_1 \land \varphi_2)$ \> $=$ \> $\max\{ i(\sigma,
  \varphi_1),i(\sigma,   \varphi_2)\}$\\
  $i(\sigma, \varphi_1 \lor \varphi_2)$ \> $=$ \> $\min\{ i(\sigma,
  \varphi_1),i(\sigma,   \varphi_2)\}$\\
  $i(\sigma, \bigcirc
  \varphi_1)$ \> $=$ \> $i(\sigma_{\geq 1}, \varphi_1) +1$ \\
  $i(\sigma,
  \varphi_1 \mathrm{U} \varphi_2)$ \> $=$ \> $\max\{ \{i(\sigma_{\geq k},
  \varphi_1)\}_{k < j},i(\sigma_{\geq j},
  \varphi_2)\} + j$,\\
\> \>
  where $j$ is the smallest $\geq 0$ s.t.~$\langle\sigma, j\rangle \models
  \varphi_{2}$, and\\ \> \> for all $k$, $i \leq k<j$ implies
  $\langle\sigma, k\rangle \models \varphi_{1}$.
\end{tabbing}

  If $\varphi = p \in AP$, then $i(\sigma, \varphi) = 0$ and clearly
  $\sigma_{0} = \sigma_{\leq 0} \models \varphi$, as
  $p \in \sigma_{0}$ by hypothesis. The base case for $\varphi = \neg p$ is
  similar, and the inductive cases for Boolean connectives are immediate.

If $\varphi = \bigcirc \varphi_1$, then $i(\sigma, \varphi) = i(\sigma_{\geq
  1}, \varphi_1) + 1$ and by induction hypothesis $\sigma_{\leq
  i(\sigma_{\geq 1}, \varphi_1)} \models \varphi_1$. Hence, for $i(\sigma, \varphi) = i(\sigma_{\geq 1}, \varphi_1) + 1$, we have
$\sigma_{\leq i(\sigma, \varphi)} \models \varphi$.
  
If $\varphi = \varphi_1 \mathrm{U}  \varphi_2$, then $i(\sigma,
  \varphi_1 \mathrm{U} \varphi_2) = \max\{ \{i(\sigma_{\geq k},
  \varphi_1)\}_{k < j},i(\sigma_{\geq j},
  \varphi_2)\} + j$
  and by induction hypothesis $\sigma_{\leq
     i(\sigma_{\geq j}, \varphi_2)} \models \varphi_2$ and for all $k < j$, $\sigma_{\leq
     i(\sigma_{\geq k}, \varphi_1)} \models \varphi_1$.
  Hence, for $i(\sigma, \varphi) = \max\{ \{i(\sigma_{\geq k},
  \varphi_1)\}_{k < j},i(\sigma_{\geq j},
  \varphi_2)\} + j$, we have
$\sigma_{\leq i(\sigma, \varphi)} \models \varphi$.

  As for the $\Leftarrow$ direction, suppose that $\sigma \not \models \varphi$.
  Then, by induction on the structure of $\varphi$ we can show that for every $i \geq 0$, $\sigma_{\leq i} \not \models \varphi$.
\end{proof}

Next we prove that formulae in $LTL_f$ can be translated into {\em
co-safe} $LTL$ in a truth-preserving manner. First of all, we assume
operators $\square$ and $\bullet$ as primitive and consider negation
on atoms only.  We also assume atom $Last$ as primitive, only true in
the last element in $\sigma$.  Then, consider translation $\tau$ from
$LTL$ into {\em co-safe }$LTL$, which commutes with all operators, and such that
\begin{eqnarray*}
\tau(\square \varphi) & = &  \tau(\varphi) \mathrm{U} (Last \land \tau(\varphi))\\
\tau(\bullet \varphi) & = &  \neg Last \to \bigcirc \tau(\varphi)
\end{eqnarray*}
\begin{prop} \label{eq:co-safe LTL_f_2}
  Let $\sigma \in (2^{\mathcal{AP}})^{+}$ be a finite sequence,
  $\varphi$ an $LTL$ formula,
\begin{eqnarray} 
\sigma \models \varphi & \text{iff} & \text{for every $\sigma' \in (2^{\mathcal{AP}})^{\omega}$}, \sigma \cdot \sigma' \models \tau(\varphi)
\end{eqnarray}
where
$\cdot$ is string concatenation.
\end{prop}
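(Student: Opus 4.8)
The plan is to prove the biconditional by structural induction on the $LTL$ formula $\varphi$, establishing a slightly more general statement that is amenable to induction: namely, for every finite sequence $\sigma \in (2^{\mathcal{AP}})^{+}$, every $i$ with $0 \le i \le |\sigma|-1$, and every infinite extension $\sigma'$,
\begin{equation*}
\langle \sigma, i \rangle \models \varphi \quad \text{iff} \quad \langle \sigma \cdot \sigma', i \rangle \models \tau(\varphi),
\end{equation*}
treating $Last$ as a fresh atom that is added to the alphabet so that $Last \in (\sigma \cdot \sigma')_j$ exactly when $j = |\sigma|-1$. The original statement is then the special case $i = 0$. Strengthening to arbitrary positions $i$ is what makes the $\bigcirc$, $\mathrm{U}$, $\square$ and $\bullet$ cases go through, since those operators shift the evaluation point.

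First I would handle the base cases. For $\varphi = p$ (and $\varphi = \neg p$), $\tau$ is the identity and the claim reduces to the observation that $p \in \sigma_i$ iff $p \in (\sigma \cdot \sigma')_i$, which holds because $i \le |\sigma|-1$. The Boolean cases $\varphi_1 \wedge \varphi_2$, $\varphi_1 \vee \varphi_2$ are immediate from the induction hypothesis since $\tau$ commutes with $\wedge$ and $\vee$. For $\varphi = \bigcirc \varphi_1$: on the finite side, $\langle\sigma,i\rangle \models \bigcirc\varphi_1$ requires $i+1 \le |\sigma|-1$ and $\langle\sigma,i+1\rangle\models\varphi_1$; on the infinite side $\langle\sigma\cdot\sigma',i\rangle\models\bigcirc\varphi_1$ only requires $\langle\sigma\cdot\sigma',i+1\rangle\models\varphi_1$ — so here the translation $\tau(\bullet\varphi) = \neg Last \to \bigcirc\tau(\varphi)$ is doing the real work, and I would note that since $\bigcirc$ is not rewritten by $\tau$, the interesting mismatch is actually isolated to the $\bullet$ clause; I would treat $\bigcirc$ directly, using that $\langle\sigma,i\rangle\models\bigcirc\varphi_1$ can only be true when $i < |\sigma|-1$, and in that range the two evaluation points agree. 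For $\varphi = \bullet\varphi_1$: if $i = |\sigma|-1$ then $\langle\sigma,i\rangle\models\bullet\varphi_1$ holds vacuously (no successor), and on the infinite side $Last$ is true at position $i$, so $\neg Last \to \bigcirc\tau(\varphi_1)$ holds vacuously too; if $i < |\sigma|-1$ then $Last$ is false at $i$, so the infinite side reduces to $\langle\sigma\cdot\sigma',i+1\rangle\models\tau(\varphi_1)$, matching $\langle\sigma,i+1\rangle\models\varphi_1$ by the induction hypothesis.

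The main obstacle is the $\square\varphi_1$ case with its translation $\tau(\varphi_1)\,\mathrm{U}\,(Last \wedge \tau(\varphi_1))$. Here I would argue as follows: $\langle\sigma,i\rangle\models\square\varphi_1$ iff $\langle\sigma,k\rangle\models\varphi_1$ for all $k$ with $i \le k \le |\sigma|-1$. On the infinite side, $\langle\sigma\cdot\sigma',i\rangle\models\tau(\varphi_1)\,\mathrm{U}\,(Last\wedge\tau(\varphi_1))$ requires some $j \ge i$ with $\langle\sigma\cdot\sigma',j\rangle\models Last\wedge\tau(\varphi_1)$ and $\langle\sigma\cdot\sigma',k\rangle\models\tau(\varphi_1)$ for all $i \le k < j$; since $Last$ holds only at position $|\sigma|-1$, the witness must be $j = |\sigma|-1$, and the condition becomes exactly $\langle\sigma\cdot\sigma',k\rangle\models\tau(\varphi_1)$ for all $i \le k \le |\sigma|-1$. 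Applying the induction hypothesis pointwise (each such $k$ satisfies $k \le |\sigma|-1$, so it is a legal evaluation position for the finite sequence) converts this to $\langle\sigma,k\rangle\models\varphi_1$ for all $i \le k \le |\sigma|-1$, i.e. $\langle\sigma,i\rangle\models\square\varphi_1$. The genuine subtlety to get right is that $\mathrm{U}$ allows $j$ strictly beyond $|\sigma|-1$ a priori, and one must exclude it using the fact that $Last$ is false everywhere in the $\sigma'$ part — this uniqueness of the $Last$-position is the linchpin of the argument. The $\varphi_1\,\mathrm{U}\,\varphi_2$ case itself is routine once the strengthened induction hypothesis is in place: both the finite and infinite semantics quantify over $j$ in $[i, |\sigma|-1]$ on the finite side versus $[i,\infty)$ on the infinite side, but since $\varphi_2$ on a finite trace can only be witnessed at a position $\le |\sigma|-1$, and the induction hypothesis for $\varphi_2$ only applies at such positions, the two notions coincide (a careful reader will check the edge case where $\varphi_2$ is itself something like $\bigcirc\psi$; this is handled uniformly because the hypothesis is stated for all positions and all subformulas). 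Finally I would remark that $\tau$ maps {\em co-safe}-admissible syntax appropriately — negations are pushed to atoms and $\to$ is treated as the abbreviation $\neg(\cdot)\vee(\cdot)$ with $\neg Last$ being a literal — so the output of $\tau$ is a genuine {\em co-safe} $LTL$ formula, and that Corollary~\ref{cor1} follows by combining this proposition with Proposition~\ref{eq:co-safe LTL_f}.
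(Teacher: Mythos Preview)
Your approach is the same as the paper's --- structural induction on $\varphi$ --- and your explicit strengthening of the claim to all positions $0 \le i \le |\sigma|-1$ is exactly what the inductive steps for $\square$ and $\bullet$ require; the paper's proof uses this strengthening silently when it applies the induction hypothesis at inner positions in the $\square$ case, so in that respect your write-up is more careful than the paper's.

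There is, however, a genuine gap in your treatment of $\bigcirc$. Since $\tau$ leaves $\bigcirc$ unchanged, your strengthened biconditional at the last position $i = |\sigma|-1$ reads: $\langle\sigma,|\sigma|-1\rangle \models \bigcirc\varphi_1$ iff, for every $\sigma'$, $\langle\sigma\cdot\sigma',|\sigma|-1\rangle \models \bigcirc\tau(\varphi_1)$. The left side is always false (no successor in $\sigma$), but the right side is true whenever $\tau(\varphi_1)$ holds at the first position of every infinite continuation --- e.g.\ for $\varphi_1 = p \vee \neg p$. So the strengthened hypothesis already fails for $\bigcirc(p\vee\neg p)$, and the failure propagates upward: for $\varphi = \square\,\bigcirc(p\vee\neg p)$ with $|\sigma|=1$ even the un-strengthened claim at $i=0$ breaks (the finite side is false, the infinite side true for every $\sigma'$). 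Your remark that the mismatch is ``isolated to the $\bullet$ clause'' is therefore not correct: the translation repairs the last-position behaviour of $\bullet$ but not of $\bigcirc$. A fix is to set $\tau(\bigcirc\varphi) = \neg Last \wedge \bigcirc\tau(\varphi)$, dual to the $\bullet$ clause, or to restrict the source language so that strong next does not occur. The paper's own proof simply omits the $\bigcirc$ and $\mathrm{U}$ cases and so never confronts this point; your more thorough case analysis is what exposes it.
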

\begin{proof}
The proof is by induction on the structure of $\varphi$. The case of
(negated) atoms, as well as Boolean connectives, is immediate.

For $\varphi = \square \psi$, $\sigma \models \varphi$ iff for every
$i \leq |\sigma|$, $\langle \sigma, i \rangle \models \psi$, iff for
every $i \leq |\sigma|$, $\langle \sigma \cdot \sigma',
i \rangle \models \tau(\psi)$ for every $\sigma' \in
(2^{\mathcal{AP}})^{\omega}$, by induction hypothesis, iff
$\sigma \cdot \sigma' \models \tau(\psi) \mathrm{U}
(Last \land \tau(\psi))$ for every $\sigma' \in
(2^{\mathcal{AP}})^{\omega}$, iff
$\sigma \cdot \sigma' \models \tau(\varphi)$.

For $\varphi = \bullet \psi$, $\sigma \models \varphi$ iff $|\sigma| \geq 1$ implies $\langle \sigma, 1 \rangle \models \psi$, iff for every $\sigma' \in (2^{\mathcal{AP}})^{\omega}$,
$\sigma  \cdot \sigma' \not \models Last$ implies $\sigma \cdot \sigma' \models \bigcirc \tau(\psi)$ by induction hypothesis, iff $\sigma \cdot \sigma' \models  \neg Last \to \bigcirc \tau(\psi)$, iff
$\sigma \cdot \sigma' \models \tau(\varphi)$.
\end{proof}

By combining Proposition~\ref{eq:co-safe LTL_f}
and \ref{eq:co-safe LTL_f_2}, we obtain the following result.
\begin{cor} \label{cor1}
There are truth-preserving (polynomial) translations between $LTL_f$ and {\em co-safe} $LTL$.
\end{cor}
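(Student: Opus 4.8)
The plan is to obtain Corollary~\ref{cor1} simply by packaging Propositions~\ref{eq:co-safe LTL_f} and~\ref{eq:co-safe LTL_f_2} into explicit translations in the two directions, together with a size analysis. For the direction from $LTL_f$ to \emph{co-safe} $LTL$ I would use the translation $\tau$ defined just before Proposition~\ref{eq:co-safe LTL_f_2}: given an $LTL_f$ formula $\varphi$ (built from atoms, $\neg$ on atoms, $\wedge$, $\vee$, $\bigcirc$, $\bullet$, $\mathrm{U}$, $\square$, with $Last$ primitive), $\tau(\varphi)$ is a \emph{co-safe} $LTL$ formula, and Proposition~\ref{eq:co-safe LTL_f_2} is exactly the truth-preservation statement we need: a finite trace $\sigma$ satisfies $\varphi$ iff every infinite extension $\sigma \cdot \sigma'$ satisfies $\tau(\varphi)$. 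Since $\tau(\varphi)$ is co-safe, Proposition~\ref{eq:co-safe LTL_f} lets us rephrase ``every infinite extension of $\sigma$ satisfies $\tau(\varphi)$'' equivalently as ``$\sigma$ is itself a finite witness for $\tau(\varphi)$'', so the correspondence between the two formalisms is coherent.

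For the reverse direction, from \emph{co-safe} $LTL$ to $LTL_f$, I would take the identity translation: the syntax of \emph{co-safe} $LTL$ is a fragment of the $LTL$ syntax, hence any co-safe formula $\psi$ is already an $LTL_f$ formula when read under the finite-trace semantics of Section~\ref{subsec:LTL}. Truth-preservation then amounts to: for every infinite $\sigma$, $\sigma \models \psi$ iff some finite prefix $\sigma_{\leq i}$ satisfies $\psi$, which is precisely Proposition~\ref{eq:co-safe LTL_f}; and conversely, any finite trace satisfying $\psi$ in the $LTL_f$ sense extends to an infinite model of $\psi$, which I would derive from the monotonicity of co-safe satisfaction in the trace (once a prefix forces $\psi$, longer and infinite extensions still satisfy it), itself provable by the same structural induction as Proposition~\ref{eq:co-safe LTL_f}. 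This translation is trivially linear.

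The only real work is the word ``polynomial''. The reverse translation is the identity, so it is fine; the forward translation $\tau$ commutes with all connectives, the $\bullet$ clause only adds a constant-size context $\neg Last \to \bigcirc(\cdot)$, and every other clause is size-preserving. The delicate case is $\tau(\square\varphi) = \tau(\varphi)\,\mathrm{U}\,(Last \wedge \tau(\varphi))$, which names $\tau(\varphi)$ twice, so under nesting of $\square$ the formula tree could blow up exponentially. I would handle this in the standard way, measuring formula size on the DAG (maximally shared subformula) representation, under which $\tau(\varphi)$ is linear in $\varphi$; alternatively one replaces the $\square$ clause with an equivalent non-duplicating scheme. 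Besides this size bookkeeping, the only point requiring care is to state precisely what a ``truth-preserving translation between $LTL_f$ and \emph{co-safe} $LTL$'' is, since the two logics live over different classes of traces: I would fix the correspondence ``finite trace $\leftrightarrow$ its set of infinite extensions'', as in Proposition~\ref{eq:co-safe LTL_f_2}, so that the corollary is just the conjunction of the two propositions, with Proposition~\ref{eq:co-safe LTL_f} guaranteeing the passage between finite witnesses and infinite models is well-behaved.
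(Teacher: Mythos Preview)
Your proposal is correct and follows exactly the route the paper takes: the corollary is obtained by combining Proposition~\ref{eq:co-safe LTL_f} (co-safe to $LTL_f$, via the identity) and Proposition~\ref{eq:co-safe LTL_f_2} ($LTL_f$ to co-safe, via $\tau$). Your treatment is in fact more thorough than the paper's, which simply states that the corollary follows from the two propositions without spelling out the translations or the size analysis; in particular, your observation that the $\square$ clause of $\tau$ duplicates $\tau(\varphi)$ and hence requires a DAG-size argument (or an equivalent non-duplicating scheme) to justify the word ``polynomial'' is a genuine point the paper leaves implicit.
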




In this paper we work with specifications expressed in {\em
  co-safe }$LTL$,
algorithm \ac{LPOPL} introduced in \cite{toro2018teaching}, which is
based on {\em co-safe }$LTL$. Nevertheless, as can be inferred from 
Sec.~\ref{subsec:EMG}, this approach is not restricted to $LTL$ and 
its variants, but applies to any
formal language whose specifications can be transformed into
a \ac{DFA}, that is, a finite-state machine that accepts or rejects
finite strings of symbols \cite{rabin1959finite}.
\begin{definition}[DFA]
A {\em deterministic finite automaton} is a tuple $\mathcal{A}
= \left\langle \Sigma, Q, q, \delta, F \right\rangle$, where $\Sigma$
is the {\em input alphabet}, $Q$ is the set of {\em states} with {\em
initial state} $q_{0} \in Q$, $\delta: \Sigma \times Q \to Q$ is the transition function, and
$F \subseteq Q$ is the set of {\em final states}.
\end{definition}

For the mathematical framework presented in Sec.~\ref{subsec:EMG} 
to work, we will need to transform {\em co-safe} $LTL$ specifications 
to a correspondent DFA. From \cite{lacerda2015optimal} 
we know that for any co-safe formula
$\varphi_k$ we can build a correspondent DFA
$\mathcal{A}_{\varphi}=\left\langle 2^{\mathcal{AP}}, Q_{\varphi}, q_{\varphi
0}, \delta_{\varphi}, F_{\varphi} \right\rangle$ that accepts exactly the finite
traces that satisfy $\varphi$.

From here on, {\em co-safe} $LTL$ specifications will be referred just as 
$LTL$ specifications without abuse of notation since we proved that both
$LTL_f$ and {\em co-safe} $LTL$ are actually equivalent.
\\

\noindent
\textbf{LTL progression.} $LTL$ formulae can be progressed \cite{bacchus2000using} along a sequence of truth assignments. In \ac{MARL}, that means the formulae representing the specifications to be learned by the agents can be updated during an episode to reflect those requirements from the formulae that have been satisfied by the current history of states. Thus, progressed formulae include only those parts of the original formulae that remain to be satisfied. For example, given the formula $\diamond (p \wedge\diamond q)$ (eventually p and then eventually q) can be progressed to $\diamond q$ once the agents reaches a state where $p$ is true. We now introduce a formal definition of progression similarly to \cite{bacchus2000using}.
\begin{definition}
Given an $LTL$ formula $\varphi$ and a truth assignment $\sigma_k$ over $\mathcal{AP}$,
$prog(\sigma_k,\varphi)$ is defined as follows:
\begin{itemize}{\small
    \item $\operatorname{prog}\left(\sigma_{k}, p\right)=$ true if $p \in \sigma_{k},$ where $p \in \mathcal{AP}$
    \item $\operatorname{prog}\left(\sigma_{k}, p\right)=$ false if $p \notin \sigma_{k},$ where $p \in \mathcal{AP}$
    \item $\operatorname{prog}\left(\sigma_{k}, \neg \varphi\right)=\neg \operatorname{prog}\left(\sigma_{k}, \varphi\right)$
    \item $\operatorname{prog}\left(\sigma_{k}, \varphi_{1} \wedge \varphi_{2}\right)=\operatorname{prog}\left(\sigma_{k}, \varphi_{1}\right) \wedge \operatorname{prog}\left(\sigma_{k}, \varphi_{2}\right)$
    \item $\operatorname{prog}\left(\sigma_{k}, \bigcirc \varphi\right)=\varphi$
    
    \item $\operatorname{prog}(\sigma_{k}, \varphi_{1} \cup \varphi_{2})=\operatorname{prog}(\sigma_{i}, \varphi_{2}) \vee(\operatorname{prog}(\sigma_{k}, \varphi_{1}) \wedge \varphi_{1} \cup \varphi_{2})$}
    
\end{itemize}
\end{definition}

\begin{example} \label{examp:MG}
  As running example throughout the paper, as well as for the
  experimental evaluation in Sec.~\ref{sec:experiments}, we
  consider a Minecraft-like grid world similar to the one introduced
  in \cite{andreas2017modular} but extended with multiple learning
  agents. In this multi-agent scenario, the learning agents can
  interact with objects, extract raw materials from their environment
  and use them to manufacture new objects. This scenario also includes
  a set of features and events that are detectable by the
  agents: \textit{\{got\_wood, used\_toolshed, used\_workbench,
    got\_grass, used\_factory, got\_iron, used\_bridge, used\_axe,
    at\_shelter\}} with obvious interpretations. This set becomes
    the set of atoms $\mathcal{AP}$ in our example. By using these atoms 
    we can specify long-term goals in {\em co-safe} $LTL$, e.g., the specification
    of making shears can be expressed as:
\begin{tabbing}
  $\varphi_{\text
  {shears}}$ \= $\triangleq$ \= $\diamond\left(\text {got}_{-} \text {wood}
\wedge \diamond \text {used}_{-} \text {workbech}\right) \wedge $\\
\> \> $\diamond\left(\text {got}_{-} \text {iron} \wedge \diamond \text
             {used}\_ \text {workbech}\right)$
\end{tabbing}
Notice that wood and iron can be collected either way and that
one workbench usage is enough to fulfill the task as long
as it is done after collecting both items. We call this kind of
specification an interleaving specification.
We suppose that
agents have to collaboratively fulfill a number of multi-task
specifications, such as make shears by collecting wood and iron and using the workbench.
We can model such a scenario as
a Markov game, by defining $S$ as a grid map representing the
positions of the agents in $N$ and the objects within the grid.
Then, the actions in $A_i$ available to each agent $i \in N$ will be the set
$\{Up, Down, Left, Right, Wait \}$. Further, $P$ will represent the probability of
reaching a new state, i.e, a new distribution on the grid map, given
the previous distribution and the joint action taken by the agents.
The reward $R_i$ will be the same for all agents and a positive reward
would be given by the system when the relevant specification is
satisfied. Note that, in order for this model to be Markovian, the
satisfaction of the specifications must depend only on the last state
and action taken.  Finally, the discount factor $\gamma$ will have a value
between 0 or 1, depending on how important we want future rewards to be
for the agents.
\end{example}

\noindent
{\bf Q-learning} \label{subsec:q-learning} \cite{christopher1992watkins} is an off-policy
model-free RL algorithm that is at the core of all the multi-agent
methods we use in our experiments in Sec.~\ref{sec:experiments}. By
off-policy we mean that it is a learning method that aims for a target
policy while using a behavior policy, and by model-free that the
algorithm does not
build a model of the environment to find
the optimal policy, where a
{\em policy} is a mapping
from states to actions.

Q-learning works by initializing the Q-values of every
state-action pair to any value (usually zero). Then, at every
time-step, the algorithm uses a behaviour policy to execute an action
$a$ in the current state $s$, which leads to a new state $s'$ and
reward $r$ returned by the environment. Given a learning rate
$\alpha$ and a future-reward discount factor $\gamma$, the estimated
Q-value $Q(s, a)$ is then updated as follows:
\begin{equation}\label{eq:q-learning}
Q(s, a) \leftarrow Q(s, a)+\alpha\left[r+\gamma \max _{a^{\prime}} Q\left(s^{\prime}, a^{\prime}\right)-Q(s, a)\right]
\end{equation}

Intuitively, Equation \ref{eq:q-learning} means that Q-values, i.e.,
the ``quality'' of a given state-action pair, are updated according to the
weighted sum of their current values and the difference between the
new observed reward outcome plus its expected value and the expected
value given the original state and action taken.

This algorithm is guaranteed to converge to the optimal Q-values as
long as every state-action pair is visited infinitely often \cite{christopher1992watkins}.
A method to fulfill this requirement is to set the behavioral policy to be
$\epsilon$-greedy on the target policy. That is, for each time-step,
the behavior policy selects a random action with probability
$\epsilon$ and the action with the highest Q-value (the one given by
the target policy) with probability $1-\epsilon$.

{\bf Deep Q-learning.}
A popular technique to address the Q-learning algorithm
is by using a table that stores every single state-action pair. This,
however, is impractical for environments with large (or infinite) state
spaces. Deep Q-learning instead makes use of experience replay and
deep neural networks parameterised by $\theta$, where $\theta$ are 
the weights of the neural network to represent an action-value function 
for a given state, that is, by
using neural networks as function approximator of the table from the
original algorithm. \ac{DQNs} were
introduced in \cite{mnih2015human}, where the algorithm makes use of a
replay memory to store transition tuples of the form $\langle s, a, r,
s' \rangle$. Function $\theta$ is learnt by sampling batches of
transitions from the replay memory and minimizing the following loss
function:
\begin{equation}
\mathcal{L}(\theta)=\sum_{i=1}^{M}\left[\left(y_{i}^{\mathrm{DQN}}-Q(s, a ; \theta)\right)^{2}\right]
\end{equation}
where $y^{\mathrm{DQN}}=r+\gamma \max _{u^{\prime}} Q\left(s^{\prime},
a^{\prime} ; \theta^{-}\right)$ and $\theta^-$ are the parameters of a
target network that are periodically updated (copied) from the
parameters $\theta$ of the training network and kept fixed for a
number of iterations. Note that even when Deep Q-learning cannot guarantee
to find the optimal policy, it is highly effective on large state spaces \cite{mnih2015human}.

{\bf LPOPL.}
\ac{LPOPL} \cite{toro2018teaching} is a reward-tailored
Q-learning function designed for single-agent reinforcement learning
problems with {\em co-safe }$LTL$ specifications. We will employ a
decentralized extension from the deep learning version of this algorithm in our experiments.

{\bf Independent Q-learning.}
Independent Q-learning \cite{tan1993multi} is perhaps the most
spread approach in \ac{MARL}. It basically decomposes a multi-agent
problem into a collection of simultaneous single-agent problems that
share the same environment. Even when this approach does not address
the non-stationarity problem introduced by the changing policies of
the other agents, it nonetheless commonly serves as a strong benchmark
for a range of MAS \cite{tampuu2017multiagent, tang2018hierarchical, rashid2018qmix}.

\section{Non-Marvokian Multi-Agent Specifications}
 Reinforcement Learning
is designed to work under Markovian reward models. The main limitation
of this kind of modeling is the Markovian assumption, i.e., rewards
depend only on the last state visited and action taken. Thus we cannot adopt
reward functions that capture conditional temporally extended
properties on vanilla Markov games, such as $LTL$ specifications. In
this section we detail how to handle non-Markovian
specifications with RL in general-purpose game systems.

\subsection{Non-Markovian Reward Games}
In this section we introduce the concept of non-Markovian reward game,
which is similar to Markov games,
but where the reward
depends on the history of states visited and actions taken.
\begin{definition}[NMRG] \label{def:NMRG}
A non-Markovian reward game is a tuple $\mathcal{G}=\langle S, N, A,
P, \overline{R}, \gamma\rangle$ where
\begin{itemize}
\item $S,\, N,\, A,\, P$ and $\gamma$
are defined as for a Markov game in Def.~\ref{markovgame};
\item the reward $\overline{R}$ is a set of real-valued functions over
state-action histories in $(S \times A_1 \times \cdots \times
A_N)^{*}$ (which we will refer as traces), i.e., for every $i \in N$,
$\overline{R}_i :(S \times A_1 \times \cdots \times
A_N)^{*} \rightarrow \mathbb{R}$.
\end{itemize}
\end{definition}

By Def.~\ref{def:NMRG}, NMRGs allow us to define rewards that depend
on a full trace rather than just the last state and action taken. We
are also working with $LTL$ specifications so we embed
them into the NMRG.
\begin{definition} \label{def:NMRG-l}
A NMRG with {\em co-safe} $LTL$ specifications is a tuple $\mathcal{G}=\langle S, N,
A, P, \overline{R}, \mathcal{L}, \Phi, \gamma\rangle$ where:
\begin{itemize}
\item $\langle S, N,
A, P, \overline{R}, \gamma\rangle$ is an NMRG;
\item $\mathcal{L} : S \rightarrow 2^{\mathcal{AP}}$ is
the labelling function; 
\item $\Phi$ is a set of {\em co-safe} $LTL$ specifications.
\end{itemize}
\end{definition}

Note that for the NMRG to be consistent, as it can be intuitively inferred,
$\overline{R_i} \in \overline{R}$ should be correlated to the progression, 
satisfaction or violation of any specification $\varphi_k \in \Phi_i$. 
This correlation must be consistent with the following rule: 
$r_s>r_p>r_v$.Where $\Phi_i$ is the set of specifications associated to 
agent $i$, $r_s$ is the reward granted by $\overline{R_i}$
to the agent $i$ when it satisfies any $\varphi_k \in \Phi_i$, $r_p$ is 
the reward granted when progressing any of those $\varphi_k$
and $r_v$ is the one granted when violating them. By violation of a 
specification we mean that it cannot be fulfilled anymore 
within the current episode.\\

Henceforth, when we mention NMRGs, we will be referring to NMRGs with
$LTL$ specifications. Note that in a NMRG $P$ and $\overline{R}$ are
unknown to agents.
Again, we cannot apply RL algorithms directly to NMRGs since this
would likely lead to non-stationary policies that would not converge. 
Notice that RL is typically based on Markovian Models; thus, 
for most of the RL algorithms to work effectively, we need the reward functions to depend solely on the last state and action taken.

\subsection{Extended Markov Games}\label{subsec:EMG}

In this work we want to learn general purpose non-Markovian $LTL$
specifications in a multi-agent Markovian setting, so as to
exploit \ac{MARL} techniques. This section is devoted to
introduce \acp{EMG} as the mathematical model to achieve this
objective.
\begin{definition} [Problem statement]
Given a NMRG $\mathcal{G}=\langle S, N, A,
P, \overline{R}, \mathcal{L}, \Phi, \gamma\rangle$ and $s_o \in S$ as
the initial state, we want each agent to learn its own non-Markovian
optimal policy $\overline{\pi}^*$ that for each state chooses the
best action leading to satisfy the given {\em co-safe} $LTL$ specification.
\end{definition}

Further, we introduce
a {\em joint non-markovian policy}  $\overline{\Pi}^*$ as the set of the
 agents' individual (non-markovian) policies
 $\{\overline{\pi}_1,\ldots,\pi_N\}$. We represent joint actions, i.e,
 sets of actions simultaneously taken by the agents, as
 $u \in \mathcal{U}$, where $u= \langle a_1,\ldots,a_N \rangle$ and
 $\mathcal{U}= A_1 \times \cdots \times A_N$.

Since we need to use a Markovian model to train our MARL agents we need a notion of equivalence between NMRGs and MGs in a similar fashion to what \cite{bacchus1996rewarding} introduced for single-agent models.

\begin{definition}[Equivalence]  \label{def:equivMG}
A NMRG $\mathcal{G}=\langle S, N, A,
P, \overline{R}, \mathcal{L}, \Phi, \gamma\rangle$ is equivalent to an
extended MG $\mathcal{G'}=\langle S', N, A, P',
R', \mathcal{L}, \Phi, \gamma\rangle$ if there exist two functions
$\tau \, : \, S^{\prime} \rightarrow S$ and $\rho : S \rightarrow
S^{\prime}$ such that:
\begin{enumerate}
        \item For all $s \in S$, $\tau(\rho(s))=s$.

\item For all $s_{1}, s_{2} \in S$ and $s_{1}^{\prime} \in
          S^{\prime}$, if $\operatorname{P}\left(s_{1}, u,
          s_{2}\right)>0$ and $\tau\left(s_{1}^{\prime}\right)=$
          $s_{1},$ there exists a unique $s_{2}^{\prime} \in
          S^{\prime}$ such that
          $\tau\left(s_{2}^{\prime}\right)=s_{2}$ and
          $\operatorname{P}\left(s_{1}^{\prime}, u,
          s_{2}^{\prime}\right)=\operatorname{P}\left(s_{1}, u,
          s_{2}\right)$.
          
        \item For any feasible trajectory $\left\langle s_{0},
    u_{1}, \ldots, s_{l-1}, u_{l}, s_l\right\rangle$ of $\mathcal{G}$
    and $\left\langle s_{0}^{\prime}, u_{1}, \ldots, s_{l-1}^{\prime},
    u_{l}, s'_l\right\rangle$ of $\mathcal{G}^{\prime},$ such that
    $\tau\left(s_{t}^{\prime}\right)=s_{t}$ and
    $\rho\left(s_{0}\right)=s_{0,}^{\prime}$ we have
    $\overline{R}\left(\left\langle s_{0}, u_{1}, \ldots, s_{l-1},
    u_{l}, s_l\right\rangle\right)=$ $R^{\prime}\left(\left\langle
    s_{0}^{\prime}, u_{1}, \ldots, s_{l-1}^{\prime}, u_{l},
    s'_l\right\rangle\right)$.
\end{enumerate}
where by {\em feasible trajectory} we refer to any trajectory that is
consistent within the transitions in the game.
\end{definition}

The crucial points in Def.~\ref{def:equivMG} are clauses (2) and (3),
which assert the equivalence of the two models (with respect to the initial
states) in both their dynamics and reward structure.
In particular, clause (2) ensures that for any given trajectory in
$\mathcal{G}$
$$
s_{0}, u_{1} \stackrel{P(s_{0}, u_1, s_{1})}{\longrightarrow} s_{1}, u_2 \cdots s_{l-1}, u_{l} \stackrel{P(s_{l-1}, u_l, s_{l})}{\longrightarrow} s_{l}
$$
and for any extended state $s_0'$ with base state $s_0$, i.e., $\tau(s'_0)=s_0$, there is a trajectory in $\mathcal{G'}$ of similar structure: 
$$
s'_{0}, u_{1} \stackrel{P'(s_{1}, u_1, s_{2})}{\longrightarrow} s'_{1}, u_2 \cdots s'_{l-1}, u_{l} \stackrel{P'(s_{l-1}, u_l, s_{l})}{\longrightarrow} s_{l}
$$
where $\tau(s_t')=s_t$ for all $t$ in the trajectory. In this case,
trajectories $\langle s_0,u_1,\ldots,s_l\rangle$ and $\langle
s'_0,u_1,\ldots,s'_l\rangle$ are called
\textit{weakly correspondent}. Clause (3) on the other hand, imposes requirements on the individual rewards assigned to each of the agents in the
extended game $\mathcal{G'}$. If we have two weakly correspondent
trajectories $\langle s_0,u_1,\ldots,s_l\rangle$ and $\langle
s'_0,u_1,\ldots,s'_l\rangle$ such that
$\rho\left(s_{0}\right)=s'_{0}$, we say that these trajectories
are \textit{strongly correspondent}. This also means that their
relationship is one-to-one, that is, each trajectory in $\mathcal{G}$
has a unique strongly correspondent in $\mathcal{G'}$, and that a
trajectory in $\mathcal{G'}$ has a unique strongly correspondent if
its first state
is an initial state. Thus, clause (3) requires that the functions in
$R'$ assign individual rewards to extended states in such a manner
that strongly corresponding trajectories
receive the same reward.\\


In a similar fashion to what \cite{brafman2018ltlf} did extending MDPs
we introduce the following definition:
\begin{definition} [Equivalent MG] \label{equivalentMG}
Given an NMRG $\mathcal{G}=\langle S, N, A,
P, \overline{R}, \mathcal{L}, \Phi, \gamma\rangle$ with
$\Phi= \{\varphi_1,\ldots,\varphi_M\}$, let $\mathcal{A}_k
= \left\langle 2^{\mathcal{AP}}, Q_{k}, q_{k
0}, \delta_{k}, F_{k} \right\rangle$ be the DFA corresponding to $\varphi_k$.
We define the {\em equivalent
Markov game} $\mathcal{G'}=\langle S', N, A, P',
R', \mathcal{L}, \Phi, \gamma\rangle$ such that
%
%
\begin{itemize}
    \item $S^{\prime}=Q_{1} \times \cdots \times Q_{M} \times S$ is the set of states;
    \item $\mathcal{U}, \, \Phi, \, \mathcal{L}$ and $\delta$ are defined as in the NMRG. 
    \item $P^{\prime} : S^{\prime} \times \mathcal{U} \times S^{\prime} \rightarrow[0,1]$ is defined as follows:
    {\footnotesize
    $$
    \begin{array}{cl}{\operatorname{P}^{\prime}\left(\vec{q}, s, u, \vec{q'} , s^{\prime}\right)=} {\left\{\begin{array}{ll}{\operatorname{P}\left(s, u, s^{\prime}\right)} & {\text { if } \forall i : \delta_{i}\left(q_{i}, s^{\prime}\right)=q_{i}^{\prime}} \\ {0} & {\text { otherwise }}\end{array}\right.}\end{array}
    $$}
    \item $R^{\prime} : S^{\prime} \times \mathcal{U} \times S^{\prime} \rightarrow \mathbb{R}$ is defined as:
    {\footnotesize
    $$
    R^{\prime}\left(\vec{q} , s, u, \vec{q'} , s^{\prime}\right)=\sum_{i : q_{i}^{\prime} \in F_{i}} r_{i}
    $$}
\end{itemize}
\end{definition} 

Intuitively, by Def.~\ref{equivalentMG} the extended state space in
$\mathcal{G'}$ is a product of the states in the original NMRG and the
automata for the various $LTL$ formulas. Notice that, since we assume a
fully observable environment, it is important that the new state space
perceived by every agent is composed by all automata.
Given a join action $u$, the $s$-component in the extended state
progresses according to the original dynamics in the NMRG, while the
transition function of the corresponding automaton marks the progress
of the other components.


\begin{thm} 
The NMRG $\mathcal{G}=\langle S, N, A, P, \overline{R}, \mathcal{L}, \Phi, \gamma\rangle$ is equivalent to the MG $\mathcal{G'}=\langle S', N, A, P', R', \mathcal{L}, \Phi, \gamma\rangle$ defined above.
\end{thm}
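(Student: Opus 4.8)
The plan is to instantiate Definition~\ref{def:equivMG} with the obvious projection and injection maps and then discharge its three clauses. Define $\tau : S' \to S$ by $\tau(\vec{q}, s) = s$, i.e.\ the projection that discards the automata components, and define $\rho : S \to S'$ by $\rho(s) = (q_{1 0}, \ldots, q_{M 0}, s)$, i.e.\ pairing a base state with the tuple of initial states of the DFAs $\mathcal{A}_1, \ldots, \mathcal{A}_M$. Clause~(1) is then immediate, since $\tau(\rho(s)) = \tau(q_{1 0}, \ldots, q_{M 0}, s) = s$ for every $s \in S$.

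For clause~(2), I would fix $s_1, s_2 \in S$ and $s_1' \in S'$ with $P(s_1, u, s_2) > 0$ and $\tau(s_1') = s_1$, so that $s_1' = (\vec{q}, s_1)$ for some $\vec{q} = (q_1, \ldots, q_M)$. Because each $\delta_i$ is a total deterministic transition function, the tuple $\vec{q'} = (\delta_1(q_1, \mathcal{L}(s_2)), \ldots, \delta_M(q_M, \mathcal{L}(s_2)))$ is uniquely determined, and I set $s_2' = (\vec{q'}, s_2)$. Then $\tau(s_2') = s_2$ and, by the first case of the definition of $P'$, $P'(s_1', u, s_2') = P(s_1, u, s_2)$. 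Uniqueness follows because any $s_2'' = (\vec{q''}, s_2)$ with $\vec{q''} \neq \vec{q'}$ falls into the second case of $P'$, whence $P'(s_1', u, s_2'') = 0 \neq P(s_1, u, s_2)$; so $s_2'$ is the only extended state over $s_2$ realising the required probability. This step also makes explicit that weakly correspondent trajectories are exactly those obtained by running the product of the DFAs alongside a feasible base trajectory.

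For clause~(3), I would argue by induction on the trajectory length $l$. Given strongly correspondent trajectories $\langle s_0, u_1, \ldots, s_l \rangle$ of $\mathcal{G}$ and $\langle s_0', u_1, \ldots, s_l' \rangle$ of $\mathcal{G}'$ with $\rho(s_0) = s_0'$, weak correspondence together with clause~(2) forces $s_t' = (\vec{q}^{(t)}, s_t)$ where $\vec{q}^{(0)}$ is the tuple of initial DFA states (by strong correspondence) and $q_i^{(t)} = \delta_i(q_i^{(t-1)}, \mathcal{L}(s_t))$; that is, $q_i^{(l)}$ is precisely the state reached by $\mathcal{A}_i$ on reading $\mathcal{L}(s_1) \cdots \mathcal{L}(s_l)$. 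By the construction of $\mathcal{A}_i$ from \cite{lacerda2015optimal}, $q_i^{(l)} \in F_i$ iff this finite trace satisfies $\varphi_i$, which by Proposition~\ref{eq:co-safe LTL_f} is the correct notion of satisfaction for a co-safe formula over the infinite continuations of the episode. It then remains to match the numeric rewards: the definition of $R'$ assigns $\sum_{i : q_i^{(l)} \in F_i} r_i$ along the last transition, and I would invoke the consistency requirement imposed on $\overline{R}$ (that $\overline{R}_i$ grants $r_s$, $r_p$, $r_v$ according to satisfaction, progression, or violation of the $\varphi_k \in \Phi_i$, with $r_s > r_p > r_v$) to identify this sum with $\overline{R}(\langle s_0, u_1, \ldots, s_l \rangle)$, so that strongly correspondent trajectories receive equal reward.

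I expect the reward clause to be the main obstacle, precisely because $\overline{R}$ is pinned down only up to the qualitative constraint $r_s > r_p > r_v$; making clause~(3) fully rigorous requires fixing that correspondence (e.g.\ stipulating that $\overline{R}_i$ on a trace equals $\sum r_i$ over the $\varphi_k \in \Phi_i$ satisfied by the trace, mirroring $R'$) and observing that, for co-safe formulas, the final states of $\mathcal{A}_{\varphi}$ may be taken absorbing, so that ``already satisfied'' and ``just satisfied'' reward the same trajectories. By contrast, the automata-tracking parts — clauses~(1) and~(2) and the structural part of~(3) — are routine given determinism of the $\delta_i$ and the acceptance property of $\mathcal{A}_{\varphi}$.
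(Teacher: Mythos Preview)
Your proposal is correct and follows essentially the same route as the paper: take $\tau$ to be the projection onto $S$ and $\rho$ the injection pairing a base state with the tuple of initial DFA states, discharge clause~(2) via determinism of the $\delta_i$, and for clause~(3) track the DFA runs along the base trajectory so that reward equality reduces to the acceptance condition of each $\mathcal{A}_{\varphi_k}$. Your version is in fact more careful than the paper's (you feed $\mathcal{L}(s_t)$ rather than a raw state or action to $\delta_i$, and you spell out uniqueness in clause~(2)), and your caveat on clause~(3) --- that exact equality of rewards requires $\overline{R}$ to be instantiated as $\sum_{k:\eta\models\varphi_k} r_k$, which the qualitative constraint $r_s>r_p>r_v$ alone does not force --- is a legitimate observation that the paper's proof simply asserts away.
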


\begin{proof}
Recalling that every $s'\in S'$ has the form $(q_1,\ldots,q_M,s)$, we
define $\delta\left(q_{1}, \ldots, q_{M}, s\right)=s$ and
$\sigma(s)=\left(q_{10}, \dots, q_{M 0}, s\right)$. We have that
$\delta(\sigma(s))=s$. Condition 2 of Def.~\ref{def:equivMG} can be
easily verified by inspection. For condition 3, let's consider a trace
$\eta=\left\langle s_{0}, u_{1}, \ldots, s_{n-1}, u_{n},
s_n\right\rangle$. We use $\sigma$ to obtain $s'_0=\sigma(s_0)$ and
given $s_t$ we define $s'_t$ (for 1$\leq t < n$) to be the unique
state ($q_{1 t}, \ldots, q_{M t}, s_{t}$) such that $q_{j
t}=\delta\left(q_{j t-1}, u_{t}\right)$ for all $1\leq j \leq
M$. Thus, we now have a corresponding trace in $\mathcal{G'}$, i.e.,
$\eta^{\prime}=\left\langle s_{0}^{\prime}, u_{1},
s_{1}^{\prime} \ldots, s_{n-1}^{\prime}, u_n, s_n\right\rangle$, which
is the only feasible trajectory in $\mathcal{G'}$ that satisfies
Condition 3. The reward at $\eta$ depends solely on whether or not
each specification $\varphi$ is satisfied by $\eta$. Nevertheless, by
construction of the DFA $\mathcal{A}_{k}$ and the
transition function $P'$, we have that $\eta \vDash \varphi_{k}$ iff
$s_{n-1}^{\prime}=\left(q_{1}, \ldots, q_{M}, s_{n}^{\prime}\right)$
and $q_{k} \in F_{k}$
\end{proof}

Given  a joint Markovian policy $\Pi'$ for $\mathcal{G}'$, a
policy $\overline{\Pi}$ on $\mathcal{G}$ that guarantees the same rewards
can be easily found. To this end, consider a trace $\sigma
= \left\langle s_{0}, u_{1}, s_{1}, \dots, s_{l-1}, u_{l},
s_l\right\rangle$ in $\mathcal{G}$ and let $q_i$ be the state of
$A_{i}$ on the input $\sigma$. We define the (non-Markovian)
joint policy $\overline{\Pi}$ equivalent to $\Pi'$ as
$\overline{\Pi}(\sigma)=\Pi'\left(q_{1}, \ldots, q_{M},
s_{l}\right)$. Given this and Def.~\ref{def:equivMG}, we
can define optimal policies for $\mathcal{G}$ by solving
$\mathcal{G'}$ instead. Extending \cite{bacchus1996rewarding} to a
multi-agent setting we obtain:
\begin{thm} Given a NMRG $\mathcal{G}$ , let $\Pi'$ be a set of optimal policies for an equivalent MG $\mathcal{G}'$. Then, the policy $\overline{\Pi}$ for $\mathcal{G}$ that is equivalent to $\Pi'$ is optimal for $\mathcal{G}$.
\end{thm}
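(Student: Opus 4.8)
The plan is to reduce optimality in $\mathcal{G}$ to optimality in $\mathcal{G}'$ through a value-preservation argument built on the equivalence established in the previous theorem. Throughout, let $V_i^{\mathcal{G}}(\cdot)$ and $V_i^{\mathcal{G}'}(\cdot)$ denote the expected discounted return of agent $i$ under a joint policy, started from $s_0$ and $\rho(s_0)$ respectively, and read ``optimal'' in the equilibrium sense that each agent's policy is a best response to the others'.

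First I would prove a value-preservation lemma: for any joint policy $\Pi'$ on $\mathcal{G}'$, the equivalent policy $\overline{\Pi}$ on $\mathcal{G}$ given by $\overline{\Pi}(\sigma)=\Pi'(q_1,\dots,q_M,s_l)$ satisfies $V_i^{\mathcal{G}}(\overline{\Pi})=V_i^{\mathcal{G}'}(\Pi')$ for every $i\in N$. This is an induction on trajectory length. Clause~(2) of Def.~\ref{def:equivMG} ensures that each feasible trajectory of $\mathcal{G}$ from $s_0$ has a \emph{unique} strongly correspondent trajectory of $\mathcal{G}'$ from $\rho(s_0)$ with identical transition probabilities, so the two induced trajectory distributions coincide under this identification; by construction $\overline{\Pi}$ plays at history $\sigma$ precisely the action $\Pi'$ plays at the correspondent extended state (the DFA-state components tracked inside the definition of $\overline{\Pi}$ are exactly the components appearing along the correspondent $\mathcal{G}'$-trajectory, by the definition of $P'$); and clause~(3) gives $\overline{R}(\sigma)=R'(\sigma')$ on strongly correspondent traces. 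Hence the discounted reward sums agree in expectation, agent by agent. An analogous argument in the reverse direction shows that for any joint policy $\overline{\Psi}$ on $\mathcal{G}$ there is a policy $\Psi^{\dagger}$ on $\mathcal{G}'$ with $V_i^{\mathcal{G}'}(\Psi^{\dagger})=V_i^{\mathcal{G}}(\overline{\Psi})$ for all $i$, obtained by projecting a $\mathcal{G}'$-history componentwise through $\tau$ and feeding it to $\overline{\Psi}$; on feasible trajectories $\Psi^{\dagger}$ coincides with the equivalent construction in reverse, so if $\overline{\Psi}$ agrees with $\overline{\Pi}$ on all agents except $i$, then $\Psi^{\dagger}$ may be taken to agree with $\Pi'$ on all agents except $i$.

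Finally I would argue by contradiction. Suppose $\overline{\Pi}$ is not optimal for $\mathcal{G}$; then some agent $i$ has a policy $\overline{\psi}_i$ with $V_i^{\mathcal{G}}(\overline{\psi}_i,\overline{\Pi}_{-i})>V_i^{\mathcal{G}}(\overline{\Pi})$. Applying the reverse lemma to the profile $(\overline{\psi}_i,\overline{\Pi}_{-i})$ produces a policy $(\psi^{\dagger}_i,\Pi'_{-i})$ on $\mathcal{G}'$ with $V_i^{\mathcal{G}'}(\psi^{\dagger}_i,\Pi'_{-i})=V_i^{\mathcal{G}}(\overline{\psi}_i,\overline{\Pi}_{-i})$, and the first lemma gives $V_i^{\mathcal{G}'}(\Pi')=V_i^{\mathcal{G}}(\overline{\Pi})$. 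Chaining the three (in)equalities yields $V_i^{\mathcal{G}'}(\psi^{\dagger}_i,\Pi'_{-i})>V_i^{\mathcal{G}'}(\Pi')$, contradicting that $\Pi'$ is an equilibrium for $\mathcal{G}'$. Therefore $\overline{\Pi}$ is optimal for $\mathcal{G}$. (If ``optimal'' is instead meant in the purely cooperative sense of maximizing a common return, the same three-step chain applies with $V_i$ replaced by the shared objective and the single-agent deviation replaced by an arbitrary alternative joint policy.)

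I expect the main obstacle to be the bookkeeping in the value-preservation lemma: making the identification of the two trajectory distributions fully rigorous needs the uniqueness clause of Def.~\ref{def:equivMG}(2) together with a careful check that $\overline{\Pi}$ and $\Pi'$ ``see'' the same state at every step, i.e.\ that the DFA components recomputed from the $\mathcal{G}$-history match those evolving along the $\mathcal{G}'$-trajectory. A secondary point requiring care is fixing the notion of optimality and noting that, since profitable deviations are quantified over all history-dependent policies, the reverse lemma — whose output on $\mathcal{G}'$ need not be Markovian — is enough to derive the contradiction.
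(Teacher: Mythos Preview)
Your argument is correct and is precisely the standard value-preservation plus contradiction route one would expect; the paper, however, does not supply a proof of this theorem at all. It states the result with the remark that the equivalent policy $\overline{\Pi}$ ``guarantees the same rewards'' (your value-preservation lemma, asserted without details) and then simply cites the single-agent result of Bacchus et al.\ as the thing being extended. So you are not diverging from the paper's proof so much as writing the proof the paper omitted.

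Two minor comments on your write-up. First, you rightly flag that the reverse map $\overline{\Psi}\mapsto\Psi^{\dagger}$ may produce a history-dependent policy on $\mathcal{G}'$; it is worth saying explicitly that in a Markov game a best response to a profile of Markov policies can always be taken Markov, so a Markov-policy equilibrium $\Pi'$ remains an equilibrium against history-dependent unilateral deviations, and hence your contradiction still bites. Second, the paper never fixes a precise optimality notion for the multi-agent case, and your closing parenthetical covering both the Nash and the fully-cooperative readings is a sensible hedge; in either reading your three-step chain goes through unchanged.
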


It can be then deduced that multi-agent RL techniques can be directly
applied to $\mathcal{G}'$, so that we can obtain an optimal
joint policy $\Pi'^*$ for $\mathcal{G}'$. Therefore, an optimal joint
policy for the NMRG $\mathcal{G}$ can be learnt from $\mathcal{G}'$.
\begin{rem}
Note that none of these game structures is explicitly known by the
learning agents, in the sense that the agents are never given the
whole structure as input, but rather they just get to know what they
can observe through their interactions. Thus, in practice
the above transformations are never done explicitly. Instead, the agents
 learn by assuming that the underlying model is
$\mathcal{G}'$.
\end{rem}

Since the state space of $\mathcal{G}'$ is the product of the state
spaces of $\mathcal{G}$ and of the automata $A_{\varphi}$, the rewards in
$R'$ are Markovian, i.e., the state representation of the
temporal formulae in $\Phi$
are compiled into the states of $\mathcal{G}'$. As
in \cite{de2018reinforcement} for single agents, and given Cor.~\ref{cor1}, 
we can state the following result in the multi-agent case:
\begin{cor} \label{cor2}
RL for {\em co-safe} $LTL$ rewards $\varphi$ over an NMRG $\mathcal{G}=\langle S,
N, A, P, \overline{R}, \mathcal{L}, \Phi, \gamma\rangle$ can be
reduced to RL over the EMG $\mathcal{G}'$ in Def.~\ref{equivalentMG}.
\end{cor}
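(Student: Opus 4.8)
The plan is to chain together the three preceding results --- Corollary~\ref{cor1}, and the two theorems establishing equivalence of the NMRG $\mathcal{G}$ with the EMG $\mathcal{G}'$ and optimality-preservation of equivalent policies --- into a single reduction statement. The target is Corollary~\ref{cor2}, which asserts that RL for {\em co-safe} $LTL$ rewards over an NMRG reduces to RL over the EMG $\mathcal{G}'$ of Def.~\ref{equivalentMG}. Since this is a corollary, the proof should be short and essentially bookkeeping; no new machinery is needed.

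First I would recall that an RL agent interacting with $\mathcal{G}$ observes only states, joint actions and scalar rewards (as noted in the Remark), never the model itself, so "doing RL over $\mathcal{G}$" means running a learning algorithm whose transition and reward samples are drawn from $\mathcal{G}$. Second, I would invoke Def.~\ref{equivalentMG} together with the first theorem to note that $\mathcal{G}'$ is a genuine Markov game equivalent to $\mathcal{G}$: its transition function $P'$ and reward $R'$ are Markovian on $S' = Q_1 \times \cdots \times Q_M \times S$, because the automata components track exactly the progression information that made $\overline{R}$ non-Markovian, and by construction $q_k \in F_k$ precisely when $\varphi_k$ has been satisfied along the trace. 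Third, I would apply the second theorem: any optimal joint policy $\Pi'^*$ for $\mathcal{G}'$ induces, via $\overline{\Pi}(\sigma) = \Pi'(q_1,\ldots,q_M,s_l)$, an equivalent joint policy $\overline{\Pi}^*$ on $\mathcal{G}$ that is optimal for $\mathcal{G}$ and collects the same rewards --- so solving $\mathcal{G}'$ solves $\mathcal{G}$. Finally, I would observe that since $\mathcal{G}'$ is an ordinary (Markovian) Markov game, any off-the-shelf MARL algorithm (e.g.\ the independent or LPOPL-style Q-learning of Sec.~\ref{subsec:q-learning}) applies directly to it; Corollary~\ref{cor1} enters only to justify that working with {\em co-safe} $LTL$ (and hence with the DFAs $\mathcal{A}_{\varphi_k}$) loses no expressiveness relative to $LTL_f$, so the reduction is not artificially narrow.

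The one step that needs a little care --- the "main obstacle," though it is mild --- is making precise that the reduction preserves not merely the existence of optimal policies but the whole learning problem, i.e.\ that sample trajectories in $\mathcal{G}'$ are in bijective, reward-matching correspondence with trajectories in $\mathcal{G}$. This is exactly the content of the strong-correspondence discussion after Def.~\ref{def:equivMG} and of condition~3 verified in the first theorem's proof: every feasible trace $\eta$ of $\mathcal{G}$ lifts to a unique feasible trace $\eta'$ of $\mathcal{G}'$ with $\tau(s'_t)=s_t$, and $\overline{R}(\eta)=R'(\eta')$. I would simply cite that correspondence rather than re-derive it. A closing sentence would then state the conclusion: since $\mathcal{G}'$ is Markovian and $\overline{\Pi}^*$ is recoverable from $\Pi'^*$ by the explicit map above, running MARL on $\mathcal{G}'$ is a sound and complete method for learning optimal policies for the {\em co-safe} $LTL$ objectives $\Phi$ on $\mathcal{G}$, which is the claim.
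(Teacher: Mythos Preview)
Your proposal is correct and matches the paper's approach: the paper gives no explicit proof of Corollary~\ref{cor2} at all, presenting it as an immediate consequence of Corollary~\ref{cor1} and the two preceding theorems (equivalence of $\mathcal{G}$ and $\mathcal{G}'$, and optimality-preservation of equivalent policies), citing the single-agent analog in \cite{de2018reinforcement}. Your write-up simply spells out the chain of inferences the paper leaves implicit, and your extra care about the trajectory bijection is exactly the content of condition~3 in Def.~\ref{def:equivMG} already verified in the first theorem's proof.
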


\begin{rem} \label{rem:temporalLanguages}
We observe that Cor.~\ref{cor2} holds for
specifications in $\Phi$ expressed in
any temporal language as long as there is a way to compute a DFA for
each of the specifications.
\end{rem}
%
This means that we can employ $LTL$ as well as the variants we
presented in Section \ref{subsec:LTL}, whose queries can be directly
transform into DFAs, or even more expressive languages such as Linear
Dynamic Logic over finite traces $LDL_f$.
\cite{brafman2018ltlf}.
\begin{example} \label{examp:NMRG}
 Consider the specification of making shears we made in our last
 example. If we want to reward our agents for making shears given the
state space we first defined, we would need to check the state-action
 history of the agents when using the workbench to reward them
 only if they previously got wood and iron. Since rewards depend on the
 trace followed by the agents, our model would not be a Markov game
 anymore but a NMRG. In order to model our problem as an Extended Markov Game,
 we transform the $LTL$ specification of making shears into a DFA whose states would be given
 to the agents as an extension of the original observation that they perceive from the environment. The agents would then begin the episodes by perceiving an observation of the map extended by the initial state of the automaton, that represents the whole specification of making shears to be fulfilled. Once the agents have progressed the specification, which in this case means that they got iron or wood, the automaton will transit to a new state that represents the remainder of the specification to be fulfilled. The agents will notice a change in their perception of the environment because of this new state in the DFA. Hence, the agents will perceive the process as Markovian. 
\end{example}

\begin{rem}
Note that the dimensionality of the extended state space is not dependant on the number of specifications, as it extends Markov Games only with the representation of the current DFA state. Thus, following the methodology presented in our Section \ref{sec:experiments}, we only need to extend the state space with a single feature. When working with multiple specifications, these can still be represented with a single DFA. 
\end{rem}

\section{Deep MARL with {\em co-safe }$LTL$ Goals}  \label{sec:MARL}
To empirically support our theoretical results on EMGs, this section
is devoted to introduce two simple decentralized extensions of
single-agent \ac{DRL} algorithms designed to work with $LTL$
specifications. The first approach is based on extending with 
temporal logic specifications a popular baseline in
MARL called I-DQN \cite{tampuu2015multiagent}, while the second is a
multi-agent extension of LPOPL that we referred in Sec~\ref{sec:intro}. 
The extended algorithms described below are employed in the experiments presented 
in Sec.~\ref{sec:experiments}.

    
\subsection{I-DQN with {\em co-safe }$LTL$ Goals}

The first algorithm is based on DQNs defined in Sec.~\ref{sec:backg}. Q-values
for a given agent $i$ can be defined in the multi-agent
context \cite{foerster2017stabilising} by fixing the policy of all the
other agents. Let $Q^{\pi_i}_i$ be the Q-value for a given state $s$
and action $a$ for agent $i$, defined as follows:
\begin{equation}\label{eq:multiQ}
\begin{split}
    Q^{\pi_i}_i( s_{0}, a_i  \;|\; \Pi_{-i})=\mathbb{E}_{\pi_i}\left[\sum_{t=0}^{T} \gamma^{t} r_t \;|\; s_0,a_i, \Pi_{-i}\right]
\end{split}
\end{equation}
where $\Pi_{-i}$ are the policies of all agents other then $i$, and
$T$ is the length of the trace.

Intuitively, Equation~(\ref{eq:multiQ}) states that the quality of an
action-state pair, given a policy for an agent $i$, can be expressed as
in the single-agent case given the policies of all other agents.
This allows Independent Q-learning \cite{tan1993multi} to train
multiple agents in a decentralized fashion. Here we consider a deep
learning variant of this algorithm (see,
e.g., \cite{tampuu2015multiagent}), where each agent is trained with
an independent DQN. However, in our case, we adopt a decentralized
version of an algorithm that uses $LTL$ specifications and $LTL$
progression instead of classical reward functions (see,
e.g., \cite{littman2017environment}). Hereafter
we refer to this algorithm
as I-DQN-l. Since we are working with sets of specifications, I-DQN-l also
incorporates a curriculum learner that selects the specification that
the agents will have to fulfill in the next episode. We selected a simple
method that assumes the specifications in $\Phi$ follow some order
$\varphi_0,\ldots,\varphi_{M-1}$ where $|\Phi|= M$, and the curriculum
learner just selects the next specification following the given order.

\subsection{I-LPOPL}
Our second algorithm for experimental evaluations
is a multi-agent extension of the LPOPL \cite{toro2018teaching}, which
is designed specifically to take advantage of {\em co-safe} $LTL$ specifications in
order to boost agents learning performance. Similarly to what we
described above for I-DQN-l, we combine the version of LPOPL that
employs DQNs as function approximators with Independent Q-learning to
obtain a centralized-decentralized algorithm we call Independent-LPOPL
(I-LPOPL). Notice that, in MARL, centralized components refer to those
that are shared among all the agents, while decentralized are those
components that run locally for each agent. Similarly to the single
agent algorithm, I-LPOPL is based on four main components:
\begin{itemize}

\item \textbf{A centralized task extractor.} We use the original task extractor from LPOPL in a centralized manner. This component receives as input the global set of specifications $\Phi$, and returns an extended set $\Phi^+$ of sub-specifications, or \textit{tasks}, which contains each unique $LTL$ task that the original set of specifications can be divided in, including the original set  $\Phi$ of specifications. For instance, given $\Phi=\{\diamond(b \wedge \diamond c), \diamond(d \wedge \diamond a)\}$, after a first progression the specifications become $\diamond c$ and $\diamond a$. Thus, here the task extractor would generate the extended set $\Phi^+=\{\diamond(b \wedge \diamond c), \diamond(d \wedge \diamond c), \diamond c , \diamond a\}$.

\item \textbf{Decentralized LPOPL behavior policies.} In I-LPOPL each agent has its own set of networks, as each agent has a different DQN working as a Q-value function approximator for each extracted task. If $\varphi$ is the specification to solve in the current episode and $\varphi^+$ the result of progressing $\varphi$ through the history of the episode so far, then the behavior policy is $\epsilon$-greedy on the I-DQN-L whose goal is to satisfy $\varphi^+$, i.e, the behavior policy for each agent would be the one optimized to solve the current goal task observed by the agent in the extended state of the EMG.

    \item \textbf{Decentralized Q-value function updates.} In each
    transition, each agent updates its DQNs independently, that is, at
    each time step, the sequence $\langle s'_t,u_t,s'_{t+1},
    r_{t+1}\rangle$ is stored in the replay buffer of each DQN, where
    $r$ is given accordingly to the task each DQN is
    learning. However, in order to allow agents to develop different
    and potentially complementary individual policies, the update
    process for each agent is independent from the others'.

\item \textbf{A centralized curriculum learner}: At the
    beginning of every iteration of I-LPOPL a global curriculum
    learning method is used to select the next specification to learn,
    in the same fashion as in I-DQN-l.
\end{itemize}

A training iteration of I-LPOPL proceeds as follows: the task
extractor receives the set of specifications and creates the sets of
tasks. Then, an independent DQN is initialized by each agent for each
of these tasks. Once done, the curriculum learner selects the next
specification to be solved by the agents in the environment. The
agents start interacting with the environment, and once there are
enough interactions to start learning, the Q-value function update algorithm 
is called after each transition. When the
curriculum learner triggers the end of the learning process for the
current task, a new one is selected. This process is repeated through
all specifications.

\section{Experimental Evaluation}\label{sec:experiments}

In this section, we
demonstrate that multiple agents can
concurrently learn policies that satisfy multiple non-Markovian
specifications.
Notice that the source code of our experiments is publicly available
at \url{https://github.com/bgLeon/EMG}.

\subsection{Experimental Setup}
This section is devoted to explain the testing environment, 
and the features used in our experiments. Our goal here 
is first to prove that multiple RL agents can be trained to solve 
multiple non-Markovian specifications,
and second that agents can develop multi-agent policies, in this
case collaborative policies, to solve the given specifications.


\textbf{The Minecraft-like world.} The environment where we test our algorithms is the Minecraft-like grid map described in Examples~\ref{examp:MG} and~\ref{examp:NMRG}. We chose this
environment as it is similarly used as testing set-up
in \cite{de2018reinforcement,toro2018teaching,
andreas2017modular}. Specifically, our environment and test setting is
a multi-agent version of the one presented in \cite{toro2018teaching},
where we introduce two agents
that need to fulfill different sets of {\em co-safe} $LTL$ specifications.
Two agents cannot be in the same place at the same
time, which means that agents can obstruct each other. We model the
environment as a discounted reward problem including the relevant
DFA. That is, given the current specification $\varphi$ to be solved and its
associated DFA $\mathcal{A}_{\varphi}$, a reward of -1 is given each time the
DFA remains in the same state, 0 if the DFA transitions to a new
intermediate state (i.e, the agents have progressed in the
specification), and +1 if the DFA reaches a terminal state (i.e, the
agents fulfilled the specification).

\textbf{Features.}
The two algorithms  consider
the same features, actions, network architecture, and optimizer. The
input features are contained in a vector that also registers the
distance of every object to the agent receiving the input, as
in \cite{andreas2017modular, toro2018teaching}. The DQNs code is
based on the implementation from OpenAI Baselines \cite{baselines}.
Specifically, we use a feedforward network with 2 hidden layers of 64
neurons using ReLu \cite{glorot2011deep} as activation function. The
networks are train using Adam optimization \cite{kingma2014adam} with
a learning rate of $5x10^{-4}$, sampling with a batch size of 32
transitions over a replay buffer with a sample size of size 25,000 and updating
the target networks every 100 steps.  The discount factor for both
DQN-l and I-DQN-l is 0.98, which works better with them since their
networks are trained to solve a full specification; while the discount
factor in both LPOPL and I-LPOPL is 0.9, which is a better choice for
these algorithms, since
each DQN is focused on
solving just a sub-specification or short-term task.

\subsection{Specification Sets}
This section introduces and explains the different sets of
specifications to be solved by the agents, also based on a multi-agent
extension of \cite{toro2018teaching,andreas2017modular}:

\begin{figure}[t]
    \centering \includegraphics[width=0.5\textwidth, height=4.35cm]{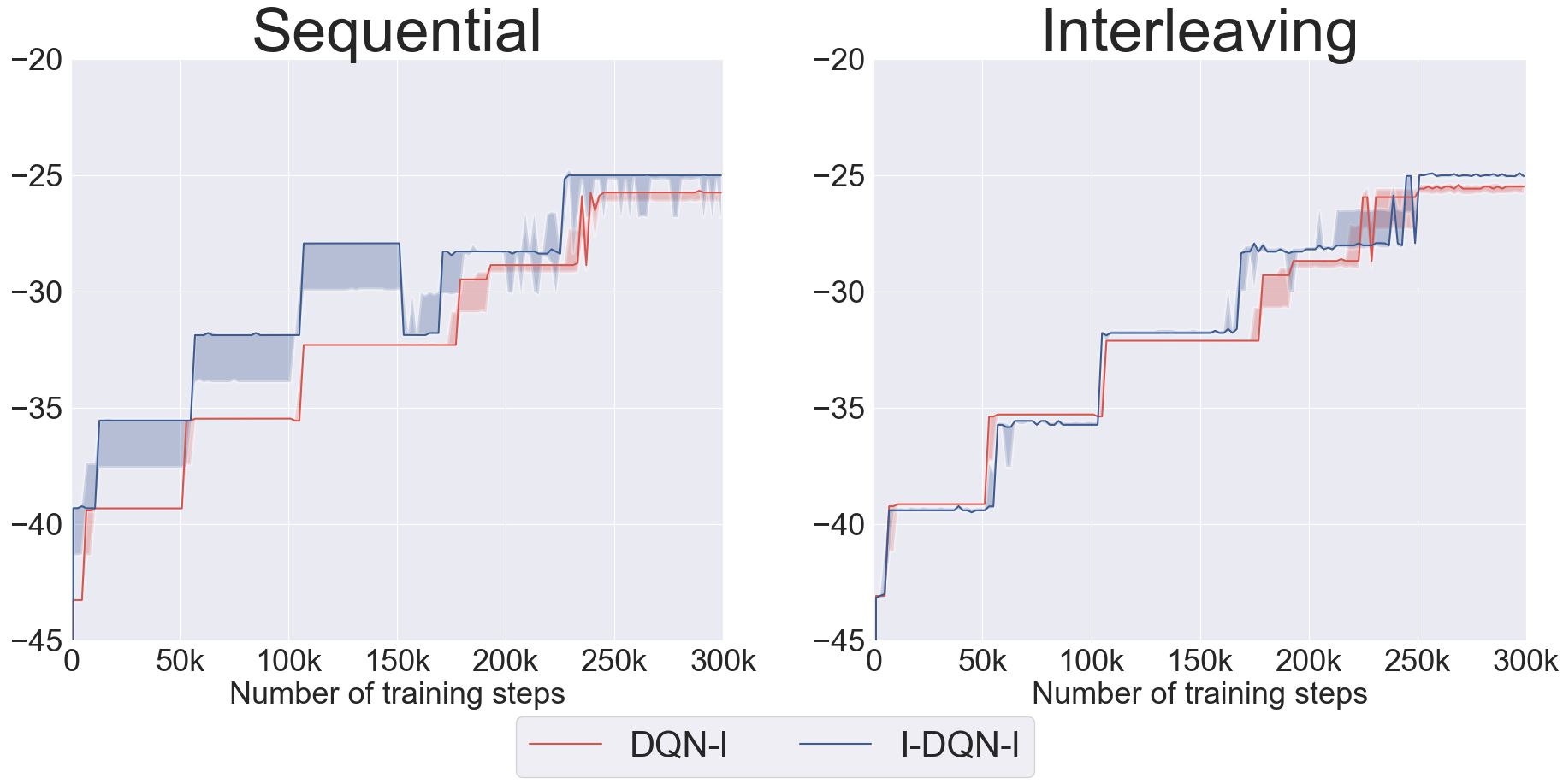} \caption{Rewards 
    obtained by DQN-l and I-DQN-l greedy policies evaluated on the whole
    set of sequence-based specifications (left) and on the whole set
    of unordered specifications (right). The algorithms were trained
    for 50k training steps per specification of the
    set.}  \label{fig:I-DQN}
\end{figure}
\begin{itemize}
    \item \textbf{Specifications as sequences of tasks.} The first set
    contains 10 specifications from \cite{andreas2017modular} which are
    a simple sequence of properties to be satisfied
by the two agents. For instance, the specification \textit{make\_shears} defined in Example~\ref{examp:MG}, 
can be translated to a sequential $LTL$ specification as
$ \diamond(\textit{got\_wood} \wedge \diamond\left(\textit{used\_workbench} 
\wedge \diamond(\textit{got\_iron} \wedge \diamond \textit{used\_workbench})\right)$. While a single
agent can fulfill the whole sequence, an intuitively better policy
would be one where one agent goes to get wood and the other waits at the
workbench until the first agent collects the wood, then the agent
closer to iron moves towards it and the other agent waits at the
workbench.


\item \textbf{Specifications as interleaving tasks.} This
set, proposed in \cite{toro2018teaching}, differs from the previous one
as specifications are no longer a fixed sequence, but rather
unnecessary ordering over parts of the specifications
are removed. The $LTL$ specification that we used in Example~\ref{examp:MG}
for making shears belongs to this set.
\end{itemize}

Note that, while we extend the experiments
in \cite{toro2018teaching}, our single-agent version of the algorithms
do not match exactly the setting used in the reference.
Specifically, we changed the discount factors, the learning
rate, and the reward functions
in order to optimize the learning times of the algorithms, a much
needed step when doing decentralized multi-agent experiments
due to
scalability with the number of agents.

\subsection{Experimental Results}

In order to evaluate our approach, we ran the multi-agent
algorithms introduced in Section \ref{sec:MARL} in a 
multi-agent version of one of the random maps from \cite{toro2018teaching}, 
and tested them against the correspondent single-agent algorithm in the unaltered map
from \cite{toro2018teaching}. In both versions, the
agents had a time-limit of 300 steps to solve the specification
assigned to an episode. Both algorithms were trained for a given
number of training steps per specification (see Figures~\ref{fig:I-DQN}
and \ref{fig:I_LPOPL}). Once the limit was reached, the curriculum
learner selected the next task to be learnt. Every 1k steps we
measured the performance of the greedy policies developed by the
algorithms on the whole set of specifications. Figures~\ref{fig:I-DQN}
and \ref{fig:I_LPOPL} shows the mean rewards obtained by these
policies, and the shadowed areas are the $25^{th}$ and $75^{th}$
percentiles obtained across 3 independent runs per algorithm per
set of tasks.
\begin{figure}[t]
    \centering \includegraphics[width=0.5\textwidth]{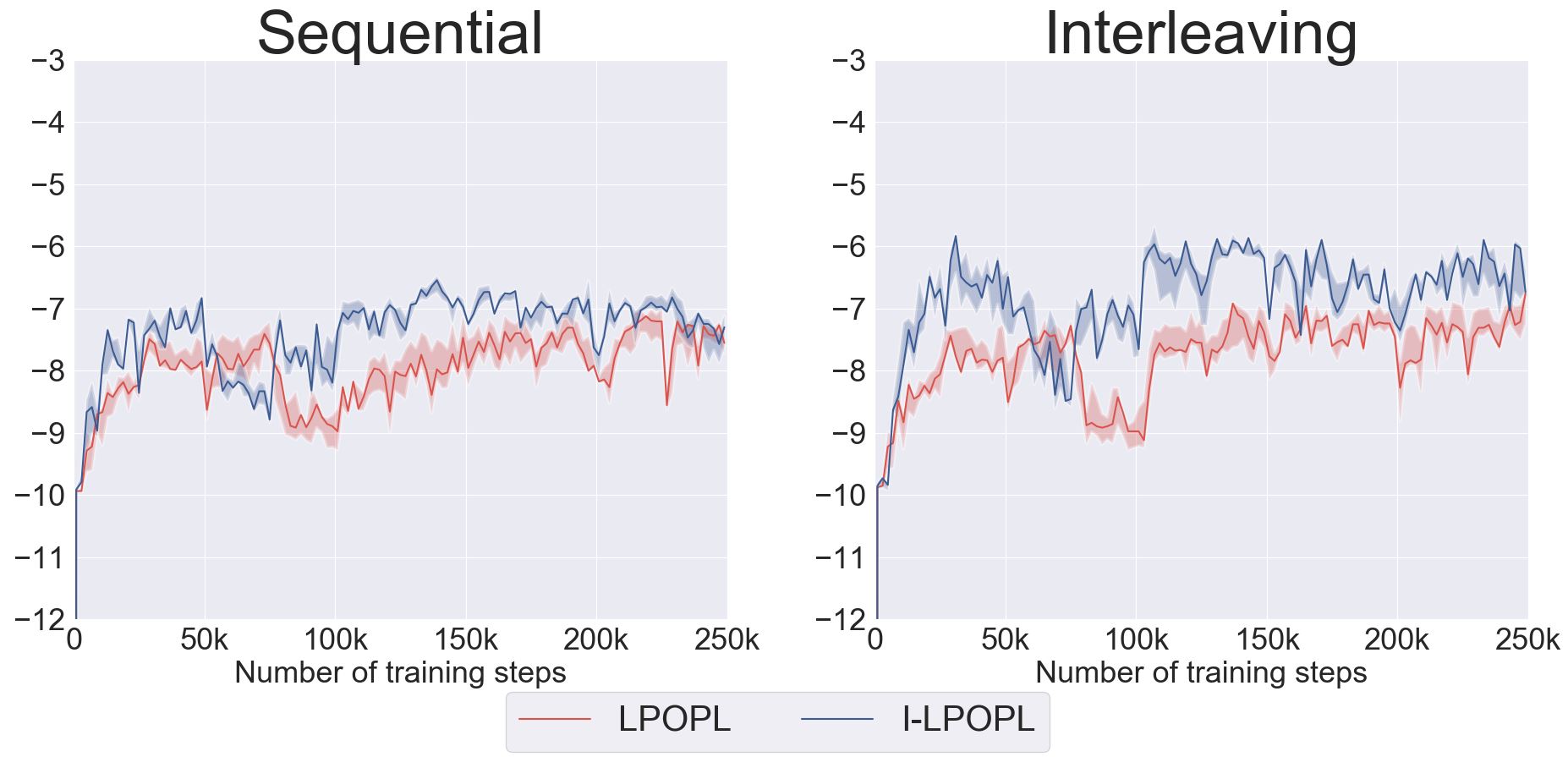} 
    \caption{Rewards obtained by LPOPL and I-LPOPL greedy policies evaluated on the whole
    set of sequence-based specifications (left) and on the whole set
    of unordered specifications (right). The algorithms were trained
    for 25k training steps per specification of the
    set.}  \label{fig:I_LPOPL}
\end{figure}
Figure \ref{fig:I-DQN} compares the performance of I-DQN-l and
DQN-l. These algorithms have a different policy network
for each specification, where each network is updated only when the
curriculum learner have selected the correspondent specification. This is
why we observe steeped reward figures in the plot. I-DQN-l achieves
higher rewards that DQN-l in the single-agent map thanks to the
collaborative strategies that the agents develop in the multi-agent
setting, i.e, if the specification requires to collect an item and
then use a tool, typically one agent goes for the item while the other
waits with the tool to use it as soon as the first agent has
finished.

Figure \ref{fig:I_LPOPL} show the performance of I-LPOPL and
LPOPL. Again, the higher rewards obtained by I-LPOPL over LPOPL are
due to agents collaborating in the multi-agent setting. These
algorithms do not show steeped reward figures because at each training
step every network is updated accordingly to its assigned
specification. We note, however, that this policy update system is also
causing some drops in the performance of the multi-agent algorithm
such as after the 50k step. When the behavior policy is used for a
given task, the agents learn that the best policy is for the farthest
agent to move towards the next goal in the specification. However,
this is not the case for the other policies that are updated without
being used as behavior policy. For instance,
the agents learn to solve the initial specification first with a collaborative
policy since when the two agents going for the same object also means that
they are obstructing each other's path. However, when these policy
networks are later updated without being used as a behavior policy, 
the Q-values are updated without encountering the other agent
obstruction anymore. Thus, the agent will in some cases forget the
collaborative policies and select single-agent ones, causing drops in
performance as those in Figure~\ref{fig:I_LPOPL}.

Finally, we highlight that while I-LPOPL learns faster that I-DQN-l in terms of
training steps, I-LPOPL demands higher computing resources and needs
longer time to train. The reason for this is that in I-DQN-l we have a
DQN per agent per specification, while in I-LPOPL-l we have a DQN per
agent per sub-specification. In our experiments, we had 10
specifications in both set of tasks, 27 sub-specifications in the
sequential set and 34 in the interleaving set. The table below shows
the computing time for each algorithm in the experiments. The
experiments were run on a laptop with a GeForce RTX 2080 as GPU, 
32 GB of RAM and a i7 7700 HQ as CPU, computing the three independent runs in parallel.
\begin{center}
\small
\begin{tabular}{ |P{1.5cm}||P{1.5cm}|P{1.5cm}| } 
 \hline
 \multicolumn{3}{|c|}{Computing time (hours)}  \\
 \hline
 Algorithm & Sequential  & Interleaving \\
 \hline
 DQN-l   & 1.20 & 1.30 \\
 I-DQN-l  & 1.57 & 2.15 \\
 LPOPL     & 5.13 & 6.5 \\
 I-LPOPL    & 11.32 & 14.28\\
 \hline
\end{tabular}
\end{center}
\hfill\break

\section{Conclusions} \label{conclusions}
We introduced Extended Markov Games as a  mathematical model
for multi-agent reinforcement learning,
to learn policies that satisfy multiple (non-Markovian) $LTL$
specifications in
multi-agent systems. Our formal definitions actually infer that any temporal logic
can be used to express the specifications as long as they can be converted to a DFA.
Our experimental results in collaborative games demonstrate that the proposed framework
allows RL agents to learn and develop multi-agent strategies to
satisfy different set of specifications.

Based on the present contribution, future directions of research
include games with partial observability, where occlusion can refer
both to the agent's ability to see the whole map, as well as to the ability to
detect other agent's state. As regards the algorithm component, future lines
include merging temporal logic with native MARL algorithms that scale
better on the number of agents.

\bibliographystyle{utils/ecai}
\bibliography{utils/Bib}

\begin{thebibliography}{10}

\bibitem{alshiekh2018safe}
Mohammed Alshiekh, Roderick Bloem, R{\"u}diger Ehlers, Bettina K{\"o}nighofer,
  Scott Niekum, and Ufuk Topcu, `Safe reinforcement learning via shielding', in
  {\em Thirty-Second AAAI Conference on Artificial Intelligence}, (2018).

\bibitem{andreas2017modular}
Jacob Andreas, Dan Klein, and Sergey Levine, `Modular multitask reinforcement
  learning with policy sketches', in {\em Proceedings of the 34th International
  Conference on Machine Learning-Volume 70}, pp. 166--175. JMLR. org, (2017).

\bibitem{bacchus1996rewarding}
Fahiem Bacchus, Craig Boutilier, and Adam Grove, `Rewarding behaviors', in {\em
  Proceedings of the National Conference on Artificial Intelligence}, pp.
  1160--1167, (1996).

\bibitem{bacchus2000using}
Fahiem Bacchus and Froduald Kabanza, `Using temporal logics to express search
  control knowledge for planning', {\em Artificial intelligence}, {\bf
  116}(1-2),  123--191, (2000).

\bibitem{brafman2018ltlf}
Ronen~I Brafman, Giuseppe De~Giacomo, and Fabio Patrizi, `Ltlf/ldlf
  non-markovian rewards', in {\em Thirty-Second AAAI Conference on Artificial
  Intelligence}, (2018).

\bibitem{cao2012overview}
Yongcan Cao, Wenwu Yu, Wei Ren, and Guanrong Chen, `An overview of recent
  progress in the study of distributed multi-agent coordination', {\em IEEE
  Transactions on Industrial informatics}, {\bf 9}(1),  427--438, (2012).

\bibitem{christopher1992watkins}
JCH Christopher, `Watkins and peter dayan', {\em Q-Learning. Machine Learning},
  {\bf 8}(3),  279--292, (1992).

\bibitem{de2018reinforcement}
Giuseppe De~Giacomo, Luca Iocchi, Marco Favorito, and Fabio Patrizi,
  `Reinforcement learning for ltlf/ldlf goals', {\em arXiv preprint
  arXiv:1807.06333}, (2018).

\bibitem{de2013linear}
Giuseppe De~Giacomo and Moshe~Y Vardi, `Linear temporal logic and linear
  dynamic logic on finite traces', in {\em Twenty-Third International Joint
  Conference on Artificial Intelligence}, (2013).

\bibitem{baselines}
Prafulla Dhariwal, Christopher Hesse, Oleg Klimov, Alex Nichol, Matthias
  Plappert, Alec Radford, John Schulman, Szymon Sidor, Yuhuai Wu, and Peter
  Zhokhov.
\newblock Openai baselines.
\newblock \url{https://github.com/openai/baselines}, 2017.

\bibitem{foerster2017stabilising}
Jakob Foerster, Nantas Nardelli, Gregory Farquhar, Triantafyllos Afouras,
  Philip~HS Torr, Pushmeet Kohli, and Shimon Whiteson, `Stabilising experience
  replay for deep multi-agent reinforcement learning', in {\em Proceedings of
  the 34th International Conference on Machine Learning-Volume 70}, pp.
  1146--1155. JMLR. org, (2017).

\bibitem{glorot2011deep}
Xavier Glorot, Antoine Bordes, and Yoshua Bengio, `Deep sparse rectifier neural
  networks', in {\em Proceedings of the fourteenth international conference on
  artificial intelligence and statistics}, pp. 315--323, (2011).

\bibitem{hu1998multiagent}
Junling Hu, Michael~P Wellman, et~al., `Multiagent reinforcement learning:
  theoretical framework and an algorithm.', in {\em ICML}, volume~98, pp.
  242--250. Citeseer, (1998).

\bibitem{huttenrauch2017guided}
Maximilian H{\"u}ttenrauch, Adrian {\v{S}}o{\v{s}}i{\'c}, and Gerhard Neumann,
  `Guided deep reinforcement learning for swarm systems', {\em arXiv preprint
  arXiv:1709.06011}, (2017).

\bibitem{junges2016safety}
Sebastian Junges, Nils Jansen, Christian Dehnert, Ufuk Topcu, and Joost-Pieter
  Katoen, `Safety-constrained reinforcement learning for mdps', in {\em
  International Conference on Tools and Algorithms for the Construction and
  Analysis of Systems}, pp. 130--146. Springer, (2016).

\bibitem{kingma2014adam}
Diederik~P Kingma and Jimmy Ba, `Adam: A method for stochastic optimization',
  {\em arXiv preprint arXiv:1412.6980}, (2014).

\bibitem{kupferman2001model}
Orna Kupferman and Moshe~Y Vardi, `Model checking of safety properties', {\em
  Formal Methods in System Design}, {\bf 19}(3),  291--314, (2001).

\bibitem{lacerda2015optimal}
Bruno Lacerda, David Parker, and Nick Hawes, `Optimal policy generation for
  partially satisfiable co-safe ltl specifications', in {\em Twenty-Fourth
  International Joint Conference on Artificial Intelligence}, (2015).

\bibitem{littman1994markov}
Michael~L Littman, `Markov games as a framework for multi-agent reinforcement
  learning', in {\em Machine learning proceedings 1994},  157--163, Elsevier,
  (1994).

\bibitem{littman2017environment}
Michael~L Littman, Ufuk Topcu, Jie Fu, Charles Isbell, Min Wen, and James
  MacGlashan, `Environment-independent task specifications via gltl', {\em
  arXiv preprint arXiv:1704.04341}, (2017).

\bibitem{mason2017assured}
George~Rupert Mason, Radu~Constantin Calinescu, Daniel Kudenko, and Alec Banks,
  `Assured reinforcement learning with formally verified abstract policies', in
  {\em 9th International Conference on Agents and Artificial Intelligence
  (ICAART)}. York, (2017).

\bibitem{mnih2015human}
Volodymyr Mnih, Koray Kavukcuoglu, David Silver, Andrei~A Rusu, Joel Veness,
  Marc~G Bellemare, Alex Graves, Martin Riedmiller, Andreas~K Fidjeland, Georg
  Ostrovski, et~al., `Human-level control through deep reinforcement learning',
  {\em Nature}, {\bf 518}(7540),  529, (2015).

\bibitem{muniraj2018enforcing}
Devaprakash Muniraj, Kyriakos~G Vamvoudakis, and Mazen Farhood, `Enforcing
  signal temporal logic specifications in multi-agent adversarial environments:
  A deep q-learning approach', in {\em 2018 IEEE Conference on Decision and
  Control (CDC)}, pp. 4141--4146. IEEE, (2018).

\bibitem{ng1999policy}
Andrew~Y Ng, Daishi Harada, and Stuart Russell, `Policy invariance under reward
  transformations: Theory and application to reward shaping', in {\em ICML},
  volume~99, pp. 278--287, (1999).

\bibitem{pnueli1977temporal}
Amir Pnueli, `The temporal logic of programs', in {\em 18th Annual Symposium on
  Foundations of Computer Science (sfcs 1977)}, pp. 46--57. IEEE, (1977).

\bibitem{rabin1959finite}
Michael~O Rabin and Dana Scott, `Finite automata and their decision problems',
  {\em IBM journal of research and development}, {\bf 3}(2),  114--125, (1959).

\bibitem{rashid2018qmix}
Tabish Rashid, Mikayel Samvelyan, Christian~Schroeder De~Witt, Gregory
  Farquhar, Jakob Foerster, and Shimon Whiteson, `Qmix: monotonic value
  function factorisation for deep multi-agent reinforcement learning', {\em
  arXiv preprint arXiv:1803.11485}, (2018).

\bibitem{schulman2017proximal}
John Schulman, Filip Wolski, Prafulla Dhariwal, Alec Radford, and Oleg Klimov,
  `Proximal policy optimization algorithms', {\em arXiv preprint
  arXiv:1707.06347}, (2017).

\bibitem{shalev2016safe}
Shai Shalev-Shwartz, Shaked Shammah, and Amnon Shashua, `Safe, multi-agent,
  reinforcement learning for autonomous driving', {\em arXiv preprint
  arXiv:1610.03295}, (2016).

\bibitem{tampuu2015multiagent}
Ardi Tampuu, Tambet Matiisen, Dorian Kodelja, Ilya Kuzovkin, Kristjan Korjus,
  Juhan Aru, Jaan Aru, and Raul Vicente, `Multiagent cooperation and
  competition with deep reinforcement learning', {\em arXiv preprint
  arXiv:1511.08779}, (2015).

\bibitem{tampuu2017multiagent}
Ardi Tampuu, Tambet Matiisen, Dorian Kodelja, Ilya Kuzovkin, Kristjan Korjus,
  Juhan Aru, Jaan Aru, and Raul Vicente, `Multiagent cooperation and
  competition with deep reinforcement learning', {\em PloS one}, {\bf 12}(4),
  e0172395, (2017).

\bibitem{tan1993multi}
Ming Tan, `Multi-agent reinforcement learning: Independent vs. cooperative
  agents', in {\em Proceedings of the tenth international conference on machine
  learning}, pp. 330--337, (1993).

\bibitem{tang2018hierarchical}
Hongyao Tang, Jianye Hao, Tangjie Lv, Yingfeng Chen, Zongzhang Zhang, Hangtian
  Jia, Chunxu Ren, Yan Zheng, Changjie Fan, and Li~Wang, `Hierarchical deep
  multiagent reinforcement learning', {\em arXiv preprint arXiv:1809.09332},
  (2018).

\bibitem{toro2018teaching}
Rodrigo Toro~Icarte, Toryn~Q Klassen, Richard Valenzano, and Sheila~A
  McIlraith, `Teaching multiple tasks to an rl agent using ltl', in {\em
  Proceedings of the 17th International Conference on Autonomous Agents and
  MultiAgent Systems}, pp. 452--461. International Foundation for Autonomous
  Agents and Multiagent Systems, (2018).

\bibitem{wen2017learning}
Min Wen, Ivan Papusha, and Ufuk Topcu, `Learning from demonstrations with
  high-level side information', in {\em Proceedings of the Twenty-Sixth
  International Joint Conference on Artificial Intelligence}, (2017).

\bibitem{ye2015multi}
Dayong Ye, Minjie Zhang, and Yun Yang, `A multi-agent framework for packet
  routing in wireless sensor networks', {\em sensors}, {\bf 15}(5),
  10026--10047, (2015).

\bibitem{zhou2018safety}
Weichao Zhou and Wenchao Li, `Safety-aware apprenticeship learning', in {\em
  International Conference on Computer Aided Verification}, pp. 662--680.
  Springer, (2018).

\end{thebibliography}
\end{document}